\definecolor{c1}{HTML}{2F70AF} 
\definecolor{pink}{HTML}{747199}
\definecolor{yellow}{HTML}{cda380}
\theoremstyle{plain}
\newtheorem{theorem}{Theorem}[section]
\theoremstyle{definition}
\newtheorem{definition}[theorem]{Definition}
\theoremstyle{remark}
\newcommand{\T}{\mathrm{T}}
\newcommand{\D}{\mathrm{D}}
\newcommand{\Y}{\boldsymbol{Y}}
\newcommand{\X}{\boldsymbol{X}}
\newcommand{\bSigma}{\boldsymbol{\Sigma}}
\newcommand{\bst}[1]{{\textbf{\textcolor{red}{#1}}}}
\newcommand{\subbst}[1]{\textcolor{blue}{\underline{{#1}}}}
\newcommand{\scalea}[1]{\scalebox{0.78}{#1}}
\newcommand{\scaleb}[1]{\scalebox{0.8}{#1}}
\title{Quadratic Direct Forecast for Training Multi-Step Time-Series Forecast Models}
\author{
  Hao Wang$^{1}$\quad Licheng Pan$^{1}$ \quad Yuan Lu$^{1}$ \quad Zhichao Chen$^{2}$ \quad Tianqiao Liu$^{3}$ \quad Shuting He$^{4}$ \quad \\ 
 \textbf{Zhixuan Chu$^{5}$\quad Qingsong Wen$^{6}$ \quad Haoxuan Li$^{7}$ \quad Zhouchen Lin$^{2,8,9}$}\\
    $^1$Xiaohongshu Inc.\\
    $^2$State Key Lab of General AI, School of Intelligence Science and Technology, Peking University \\
    $^3$College of Engineering, Purdue University\\
    $^4$School of Computing and Artificial Intelligence, Shanghai University of Finance and Economics\\
    $^5$College of Computer Science and Technology, Zhejiang University \\
    $^6$Squirrel AI \quad $^7$Center for Data Science, Peking University  \\
    $^8$Institute for Artificial Intelligence, Peking University \\
    $^9$Pazhou Laboratory (Huangpu), Guangzhou, Guangdong, China  \\
}
\begin{document}
\maketitle

\begin{abstract}

The design of training objective is central to training time-series forecasting models. Existing training objectives such as mean squared error mostly treat each future step as an independent, equally weighted task, which we found leading to the following two issues: (1) overlook the \textit{label autocorrelation effect} among future steps, leading to biased training objective; (2) fail to set \textit{heterogeneous task weights} for different forecasting tasks corresponding to varying future steps, limiting the forecasting performance. To fill this gap, we propose a novel quadratic-form weighted training objective, addressing both of the issues simultaneously. Specifically, the off-diagonal elements of the weighting matrix account for the label autocorrelation effect, whereas the non-uniform diagonals are expected to match the most preferable weights of the forecasting tasks with varying future steps. To achieve this, we propose a Quadratic Direct Forecast (QDF) learning algorithm, which trains the forecast model using the adaptively updated quadratic-form weighting matrix. Experiments show that our QDF effectively improves performance of various forecast models, achieving state-of-the-art results. Code is available at~\url{https://anonymous.4open.science/r/QDF-8937}.

\end{abstract}
\section{Introduction}

Time-series forecasting, which involves predicting future values from past observations, is foundational to a wide range of applications, including meteorological prediction~\citep{application_weather}, financial stock forecasting~\citep{mars}, and robotic trajectory forecasting~\citep{fanlearnable}.
In the context of deep learning, the development of robust forecasting models relies on two crucial components~\citep{wang2025timeo1}: \textit{(1) the design of neural architectures for forecasting and (2) the formulation of suitable learning objectives for model training.} Both present distinct challenges.

Recent research has focused intensively on the first aspect, namely, neural architecture design. The principal challenge lies in efficiently capturing the autocorrelation structures in the historical sequence. A variety of architectures have been proposed~\citep{Timesnet,Moderntcn,S4}. One exemplar would be Transformer models that employ self-attention to model autocorrelation and scale effectively~\citep{itransformer, PatchTST, fredformer}. Another rapidly developing direction would be linear models, which use linear projections to model autocorrelation and demonstrate competitive performance~\citep{lin2024cyclenet, DLinear,OLinear}. These advances showcase the fast-paced evolution of model architectures for time-series forecasting.

In contrast, the formulation of learning objectives remains relatively underexplored~\citep{li2025nipstowards,qiudbloss,psloss}. Most recent studies resort to mean squared error (MSE) as the learning objective~\citep{tqnet,lin2024cyclenet,itransformer}. However, MSE overlooks the autocorrelation effect present in label sequences, which renders it a biased objective~\citep{wang2025iclrfredf,wang2025timeo1}. Additionally, it assigns equal weights to all forecasting tasks with varying future steps, ignoring the potential of a heterogeneous weighting scheme. As a result, the learning objective design of forecast models is challenged by label autocorrelation effect and heterogeneous task weights, which are not fully addressed by existing methods.

In this work, we first propose a novel quadratic-form weighted training objective that simultaneously tackles both issues. Specifically, the off-diagonal elements of the weighting matrix model the label autocorrelation effect, while the non-uniform diagonal elements enable the assignment of heterogeneous task weights to different future steps. Building on this, we introduce the Quadratic Direct Forecast (QDF) learning algorithm, which trains the forecasting model using an adaptively updated quadratic-form weighting matrix. 
Our main contributions are summarized as follows:
\begin{itemize}[leftmargin=*]
    \item We identify two fundamental challenges in designing learning objectives for time-series forecast models: the label autocorrelation effect and the heterogeneous task weights.
    \item We propose a quadratic-form weighted training objective that tackles both challenges. The QDF learning algorithm is proposed to apply the objective for training time-series forecast models.
    \item We perform comprehensive empirical evaluations to demonstrate the effectiveness of QDF, which enhances the performance of state-of-the-art forecast models across diverse datasets.
\end{itemize}

\section{Preliminaries}
\subsection{Problem definition}
This work investigates the multi-step time-series forecasting task. Formally, given a time-series dataset $\boldsymbol{S}$ with $\D$ covariates, the historical sequence at time step $n$ is denoted by $\X = [\boldsymbol{S}_{n-\mathrm{H}+1}, \ldots, \boldsymbol{S}_n] \in \mathbb{R}^{\mathrm{H} \times \D}$, while the label sequence is $\Y = [\boldsymbol{S}_{n+1}, \ldots, \boldsymbol{S}_{n+\T}] \in \mathbb{R}^{\T \times \D}$, where $\mathrm{H}$ and $\T$ denote the history and forecast horizons, respectively. Recent approaches predominantly adopt a direct forecasting (DF) paradigm, predicting all T future steps simultaneously~\citep{itransformer,wuk2vae}.  Therefore, the goal is to learn a parameterized model $g_\theta: \mathbb{R}^{\mathrm{H}\times\mathrm{D}}\rightarrow\mathbb{R}^{\mathrm{T}\times\mathrm{D}}$ that generates forecast sequence $\hat{\Y}$ approximating $\Y$, where $\theta$ is the learnable parameters in the forecast model\footnote{Hereafter, we consider the univariate case ($\mathrm{D} = 1$) for clarity. In the multivariate case,
each variable can be treated as a separate univariate case when computing the learning objectives.}.

Advances in forecasting models typically revolve around two axes: (1) the design of neural architectures for encoding historical inputs~\citep{itransformer, DLinear}; and (2) the design of learning objectives for effective training~\citep{wang2025timeo1, wang2025iclrfredf,qiudbloss,psloss}. This study is primarily concerned with the latter—specifically, the improved formulation of learning objectives. Nonetheless, we briefly introduce both aspects as follows for completeness.

\subsection{Neural network architectures in time-series forecasting}
The principal goal of architecture development in time-series forecasting is to learn informative representations of historical data. The key challenge is to accommodate the autocorrelation effect present in the historical sequence. Traditional approaches include recurrent neural networks (RNNs)~\citep{S4,chenmode}, convolutional neural networks (CNNs)~\citep{Timesnet,Moderntcn}, and graph neural networks (GNNs)~\citep{stemgnn,Mtgnn}. In the recent literature, one predominant series are Transformer models (e.g., TQNet~\citep{tqnet}, PatchTST~\citep{PatchTST}, iTransformer~\citep{itransformer}), which show strong scalability on large datasets but at a higher computational cost. Another predominant series are linear models (e.g., TimeMixer~\citep{wang2024timemixer}, DLinear~\citep{DLinear}), which are efficient but may struggle to scale and cope with varying historical sequence length. There are also hybrid architectures that fuse Transformer and linear modules to combine their respective advantages~\citep{lin2024cyclenet,wusrsnet}.

\subsection{Learning objectives in time-series forecasting}

The primary challenge driving the development of learning objectives in time-series forecasting is to accommodate the autocorrelation effect present in the label sequence. Initially, the standard mean squared error (MSE) is widely used to train forecast models \citep{tqnet, lin2024cyclenet,itransformer}, which measures the point-wise difference between the forecast and label sequences:
\begin{equation}\label{eq:tmp}
\mathcal{L}_\mathrm{mse}=\left\|\Y-g_\theta(\X)\right\|^2,
\end{equation}
However, the MSE objective is known to be biased, as it neglects the presence of autocorrelation in the label sequence~\citep{wang2025iclrfredf}. To mitigate this issue, alternative objectives have been explored. One line of work promotes shape-level alignment between the forecast and label sequences~\citep{le2019shape, psloss}, emphasizing the autocorrelation structure, though these approaches generally lack theoretical guarantees for bias elimination. Another line of works transforms the labels into decorrelated components before alignment, thereby mitigating bias and improving forecast performance~\citep{wang2025iclrfredf, wang2025timeo1}. These empirical advancements underscore the critical role of objective function design in advancing time-series forecasting.

\begin{figure*}
\subfigure[Partial correlation and significance of labels.]{\includegraphics[width=0.48\linewidth]{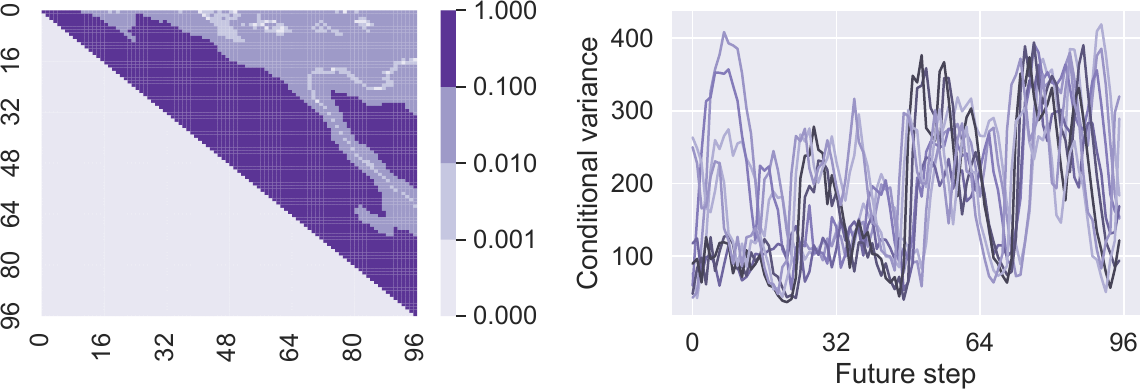}}
\hfill
\raisebox{-0.0\height}{\rule{0.8pt}{2.4cm}} 
\hfill
\subfigure[Partial correlation of extracted label components.]{\includegraphics[width=0.48\linewidth]{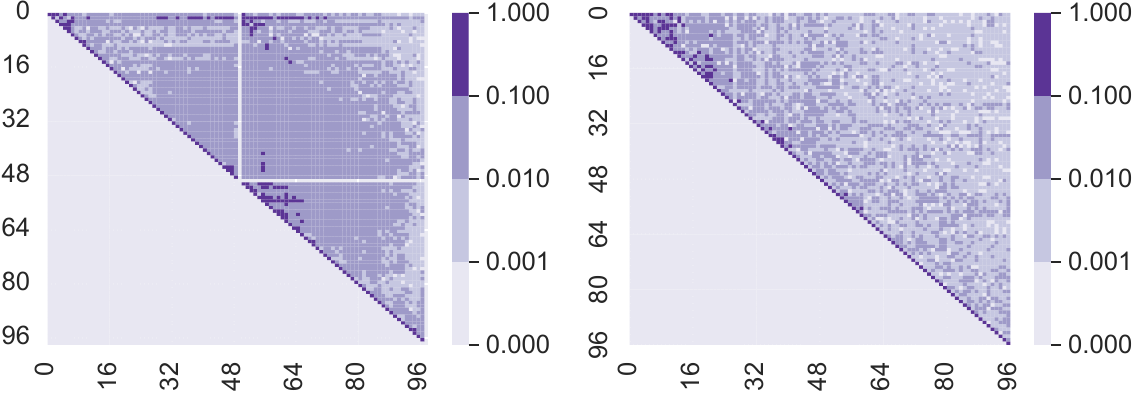}}
\caption{Statistics of label components conditioned on $\X$, with a forecast horizon of $\mathrm{T}=96$. (a) Partial correlation and conditional variance estimated from the raw label sequence $Y$, with colors indicating different $\X$. (b) Partial correlation matrices of label components extracted by FreDF and Time-o1~\citep{wang2025iclrfredf,wang2025timeo1}. Calculation details are provided in Appendix~A.}
\label{fig:auto}
\end{figure*}

\section{Methodology}
\subsection{Motivation}

The design of learning objective is central to training time-series forecasting models. Likelihood maximization provides a principled approach, minimizing the negative log-likelihood (NLL) of label sequence. By Theorem~\ref{thm:like}, this NLL is a quadratic form weighted by the inverse of the conditional covariance matrix $\bSigma$. This formulation reveals two key challenges in designing learning objectives.
\begin{itemize}[leftmargin=*]
    \item \textbf{Autocorrelation effect.} Time-series data exhibit strong autocorrelation, where observations are highly correlated with their past values. This implies that future steps within the label sequence are correlated even when conditioned on the history $\X$~\citep{wang2025iclrfredf}. This property necessitates modeling the off-diagonal elements of $\bSigma^{-1}$, which are not necessarily zeros. 
    \item \textbf{Heterogeneous weights.} The training of forecast models is a typical multitask learning problem,, where predicting each future step is a distinct task. These tasks often exhibit varying levels of difficulty and uncertainty, suggesting they require different weights during optimization. This property necessitates modeling the diagonal elements of $\bSigma^{-1}$, which are not necessarily uniform.
\end{itemize}

\begin{theorem}[Likelihood formulation]
\label{thm:like}
Given historical sequence $\boldsymbol{X}$, let $\Y\in\mathbb{R}^\mathrm{T}$ be the associated label sequence and $g_\theta(\X)\in\mathbb{R}^\mathrm{T}$ be the forecast sequence. Assuming the forecast errors follow a multivariate Gaussian distribution, the NLL of the label sequence, omitting constant terms, is:
\begin{equation}\label{eq:nll}
    \mathcal{L}_\mathrm{\bSigma}(\X,\Y;g_\theta)=\left\|\Y-g_\theta(\X)\right\|_{\bSigma^{-1}}^2=\left(\Y-g_\theta(\X)\right)^\top \bSigma^{-1} (\Y-g_\theta(\X)),
\end{equation}
where $\bSigma\in\mathbb{R}^{\T\times\T}$ is the conditional covariance of the label sequence given $\X$.
\end{theorem}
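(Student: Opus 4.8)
The plan is to derive the stated quadratic form directly from the density of a multivariate Gaussian. First I would introduce the forecast residual $\boldsymbol{\epsilon} = \Y - g_\theta(\X) \in \mathbb{R}^\T$ and invoke the stated assumption that, conditioned on $\X$, this residual is Gaussian; taking it to be centered (zero-mean) with covariance $\bSigma$ is the natural reading, since $\bSigma$ is the conditional covariance of the labels. Equivalently, $\Y \mid \X$ is Gaussian with mean $g_\theta(\X)$ and covariance $\bSigma$. This reframing is what lets us identify $g_\theta(\X)$ with the conditional mean of the label sequence, so that the residual is centered and its density is governed solely by $\bSigma$.

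Next I would write out the conditional density explicitly,
\begin{equation}
p(\Y \mid \X) = \frac{1}{(2\pi)^{\T/2}\,\lvert\bSigma\rvert^{1/2}} \exp\!\left(-\tfrac{1}{2}\,(\Y - g_\theta(\X))^\top \bSigma^{-1}(\Y - g_\theta(\X))\right),
\end{equation}
and take the negative logarithm, yielding three additive terms: the normalization constant $\tfrac{\T}{2}\log(2\pi)$, the log-determinant term $\tfrac{1}{2}\log\lvert\bSigma\rvert$, and the quadratic form $\tfrac{1}{2}(\Y - g_\theta(\X))^\top \bSigma^{-1}(\Y - g_\theta(\X))$.

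The final step is the bookkeeping of constants. Since $\bSigma$ is defined as the covariance of the label sequence conditioned on $\X$, it does not depend on the learnable parameters $\theta$; hence both $\tfrac{\T}{2}\log(2\pi)$ and $\tfrac{1}{2}\log\lvert\bSigma\rvert$ are constants with respect to optimization and can be discarded, as can the overall factor $\tfrac{1}{2}$, which rescales the objective without changing its minimizer. Dropping these recovers exactly $\mathcal{L}_{\bSigma}(\X,\Y;g_\theta) = (\Y - g_\theta(\X))^\top \bSigma^{-1}(\Y - g_\theta(\X))$, which in the weighted-norm notation is $\|\Y - g_\theta(\X)\|_{\bSigma^{-1}}^2$, as claimed.

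I would note that there is no deep mathematical obstacle here; the result is a direct specialization of the Gaussian log-density. The one point requiring care is the interpretation of \emph{omitting constant terms}: this is only legitimate because $\bSigma$ is held fixed with respect to $\theta$. If one instead wished to learn $\bSigma$ jointly, the $\log\lvert\bSigma\rvert$ term could no longer be dropped and would need to be retained as a regularizer. The cleanest presentation therefore states up front that $\bSigma$ is treated as a given (conditional) covariance rather than a quantity optimized alongside $g_\theta$, so that the reduction to the pure quadratic form is unambiguous.
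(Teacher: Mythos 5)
Your proof is correct and follows essentially the same route as the paper's: write the conditional Gaussian density with mean $g_\theta(\X)$ and covariance $\bSigma$, take the negative logarithm, and discard the $\tfrac{\T}{2}\log(2\pi)$, $\tfrac{1}{2}\log\lvert\bSigma\rvert$, and scaling terms as constants with respect to $\theta$. Your closing caveat that $\log\lvert\bSigma\rvert$ could not be dropped if $\bSigma$ were optimized jointly with $\theta$ is a sound observation not made explicit in the paper, but it does not change the argument for the theorem as stated.
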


However, it is infeasible to directly minimize $\mathcal{L}_\mathrm{\bSigma}$ for model training. The conditional covariance $\bSigma$ is unknown and intractable to estimate from the single label sequence typically available per $\X$. This has led to the widespread adoption of the mean squared error (MSE) objective, which implicitly assumes $\bSigma$ is an identity matrix~\citep{tqnet} and therefore fails to model either autocorrelation or heterogeneous uncertainty. Subsequent works advocate transforming the labels into latent components for alignment, exemplified by \textbf{FreDF}~\citep{wang2025iclrfredf} and \textbf{Time-o1}~\citep{wang2025timeo1}.  However, the transformations they employ guarantee only \textit{marginal decorrelation} of the obtained components, not the required \textit{conditional decorrelation} (i.e., diagonal $\bSigma$)\footnote{This property is demonstrated in Theorem 3.3 \citep{wang2025iclrfredf} and Lemma 3.2 \citep{wang2025timeo1}.}, thereby failing to accommodate the autocorrelation effect. Moreover, they assign equal weight to optimize each component, thereby failing to accommodate heterogeneous weights. \textit{Hence,  existing methods fail to address the two challenges in designing learning objectives for time-series forecast models.}

\textbf{Case study. } We conducted a case study on the ECL dataset to substantiate our claims (\autoref{fig:auto}). The primary observations are summarized as follows:
\begin{itemize}[leftmargin=*]
    \item \textbf{The identified challenges are prominent.} As shown in \autoref{fig:auto}(a), the partial correlation matrix exhibits significant off-diagonal values (with over 61.4\% exceeding 0.1), confirming the presence of autocorrelation effect. Additionally, the conditional variances differ considerably across future steps, highlighting the importance of using heterogeneous error weights.
    \item \textbf{Existing methods fail to fully address them.} The partial correlation coefficients of the latent components extracted by FreDF and Time-o1~\citep{wang2025iclrfredf,wang2025timeo1} are presented in \autoref{fig:auto}(b). Although the non-diagonal elements are notably reduced, residual values remain, indicating that these methods do not completely eliminate autocorrelation in the transformed components.
\end{itemize}



Given the critical role of the weighting matrix in elucidating the two challenges and the limitation of existing methods, it is essential to investigate strategies for incorporating the weighting matrix into the design of learning objectives for training forecast models. Specifically, three key questions arise: \textit{(1) How can the weighting matrix be estimated from data? (2) How to define a learning objective for model training with it? (3) Does it improve forecasting performance?}

\subsection{Learning weighting matrix targeting generalization}

A direct approach to incorporating the weighting matrix $\bSigma$ is to use the NLL from \eqref{eq:nll}. However, as previously established,  it is impractical for training because the true conditional covariance $\bSigma$ is unknown and intractable to estimate accurately from data. To overcome this challenge, we advocate to learn \textit{proxy} $\bSigma$ targeting model generalization.  To this end, we treat $\bSigma$ as learnable parameters and the associated optimization problem is formulated in Definition~\ref{def:sigma}.

\begin{definition}\label{def:sigma}
    Let \(\mathcal{D}_{\mathrm{in}} = (\X_{\mathrm{in}}, \Y_{\mathrm{in}})\) and \(\mathcal{D}_{\mathrm{out}} = (\X_{\mathrm{out}}, \Y_{\mathrm{out}})\) be non-overlapping splits of the training data, each consisting of historical and label sequences. The bilevel optimization problem is
    \begin{equation}\label{eq:opt}
    \min_{\bSigma\succeq 0} \ 
    \mathcal{L}_{\bSigma} \left(\X_{\mathrm{out}}, \Y_{\mathrm{out}};g_{\theta^*}\right)
    \quad \text{where} \quad
    \theta^\star = \arg\min_{\theta} \ \mathcal{L}_{\bSigma}(\X_{\mathrm{in}}, \Y_{\mathrm{in}};g_\theta).
    \end{equation}
    where $\bSigma\succeq 0$ means $\bSigma$ is semi-definite positive, a fundamental property of covariance matrix.
\end{definition}

There are two loops in the optimization problem \eqref{eq:opt}. The inner problem trains the forecast model $g_\theta$ on a data split $\mathcal{D}_\mathrm{in}$ using a fixed $\bSigma$; the outer problem then updates $\bSigma$ to improve the generalization performance of the trained model on a disjoint holdout set $\mathcal{D}_{\mathrm{out}}$. This process ensures the learned 
$\bSigma$ produces a learning objective that drives the forecast model generalizes well.

\textbf{Re-parameterization.} To solve the problem \eqref{eq:opt}, it is crucial to enforce $\bSigma\succeq 0$. We address this by reparameterizing $\bSigma$ via its Cholesky factorization, $\bSigma=\boldsymbol{L}\boldsymbol{L}^\top$, where $\boldsymbol{L}$ is a lower-triangular matrix with positive diagonals (which can be ensured with a softplus activation). This reparameterization converts the constrained optimization over $\bSigma$ into an unconstrained optimization over $\boldsymbol{L}$, thus enabling the use of standard gradient-based optimization methods. For clarity, in the following derivations, we continue to use $\bSigma$ and omit the notational complexity introduced by this reparameterization.
 
\begin{figure}
\centering
\begin{minipage}{0.45\textwidth}  
\begin{algorithm}[H]
\footnotesize
\caption{Atomic update procedure of QDF.}
\label{algo:1}
\textbf{Input}: $g_\theta$: forecast model, $\bSigma$: weighting matrix, $\mathcal{D}$: dataset used to learn $\bSigma$. \\
\textbf{Parameter}: 
$\mathrm{N}$: number of updates, $\eta$: update rate. \\
\textbf{Output}: $\bSigma$: obtained weighting matrix. \\
\begin{algorithmic}[1]
\STATE $\mathcal{D}_\mathrm{in}, \mathcal{D}_\mathrm{out} \leftarrow \mathrm{split}(\mathcal{D})$
\FOR{$n = 1,2,..., \mathrm{N}$}
\STATE $\X_{\mathrm{in}}, \Y_{\mathrm{in}}\leftarrow\mathcal{D}_{\mathrm{in}}$\;
\STATE $\theta\leftarrow\theta-\nabla_{\theta}\mathcal{L}_{\bSigma}(\X_{\mathrm{in}}, \Y_{\mathrm{in}};g_{\theta})$
\ENDFOR
\STATE $\X_{\mathrm{out}}, \Y_{\mathrm{out}}\leftarrow\mathcal{D}_{\mathrm{out}}$\;
\STATE $\bSigma\leftarrow\bSigma-\nabla_{\bSigma}\mathcal{L}_{\bSigma}(\X_{\mathrm{out}}, \Y_{\mathrm{out}};g_{\theta})$
\end{algorithmic}
\end{algorithm}
\end{minipage}%
\hfill  
\begin{minipage}{0.53\textwidth}  
\begin{algorithm}[H]
\footnotesize
\caption{The overall workflow of QDF.}  
\label{algo:2}
\textbf{Input}: $g_\theta$: forecast model, $\mathcal{D}_\mathrm{train}$: training set. \\
\textbf{Parameter}: 
$\mathrm{N}_\mathrm{in}$: round of inner update, $\mathrm{N}_\mathrm{out}$: round of outer update, $\eta$: update rate, $\mathrm{K}$: number of splits. \\
\textbf{Output}: $\mathcal{L}$: obtained learning objective. \\
\begin{algorithmic}[1]
\STATE $\bSigma\leftarrow \boldsymbol{I}_\mathrm{T}, \quad \mathcal{D}_1,\mathcal{D}_2,...,\mathcal{D}_\mathrm{K} \leftarrow \mathrm{split}(\mathcal{D}_\mathrm{train})$
\WHILE{$n = 1,2,..., \mathrm{N}_\mathrm{out}$}
\STATE $\bSigma_{n+1}\leftarrow\mathrm{Algorithm1}(\bSigma_n,\mathcal{D}_k,g_\theta)$, $ k=1,...,\mathrm{K}$\;
\STATE \textbf{if} {$\left\|\bSigma_{n+1}-\bSigma_{n}\right\|_\mathrm{F}<1e^{-4}$}: \textbf{break}.
\ENDWHILE
\STATE $\X_{\mathrm{train}}, \Y_{\mathrm{train}}\leftarrow\mathcal{D}_{\mathrm{train}}$
\STATE $\mathcal{L}\leftarrow\mathcal{L}_{\bSigma_{n+1}}(\X_{\mathrm{train}}, \Y_{\mathrm{train}};g_\theta)$
\end{algorithmic}
\end{algorithm}
\end{minipage}
\end{figure}

The solution to \eqref{eq:opt} using gradient descent is presented in Algorithm \ref{algo:1}. It begins by splitting the dataset $\mathcal{D}$ into two subsets $\mathcal{D}_\mathrm{in}$ and $\mathcal{D}_\mathrm{out}$ without overlaps (step 1). In the inner loop, $\mathcal{L}_{\bSigma}$ is computed on $\mathcal{D}_\mathrm{in}$ and its gradient with respect to $\theta$ is  obtained via automatic differentiation, which drives the update of $\theta$ (steps 2-5). In the outer loop, $\mathcal{L}_{\bSigma}$ is computed over $\mathcal{D}_\mathrm{out}$ and its gradient with respect to $\bSigma$ drives the update of $\bSigma$ (steps 6-7). \textit{Notably}: the outer loop gradient is taken through the model parameter $\theta$ to $\bSigma$, rather than directly from $\mathcal{L}_{\bSigma}$ to $\bSigma$. This ensures that the influence of changing $\bSigma$ on the updated $\theta$ (and thus on generalization performance) is involved.  This procedure yields a one-step update of $\bSigma$ toward optimizing \eqref{eq:opt}, and can be iterated to progressively refine $\bSigma$.

\subsection{The workflow of QDF for training time-series forecast models}

While we have established a method to learn an instrumental weighting matrix $\bSigma$,  it is not clear how to use the obtained $\bSigma$ for training forecast models.  To fill this gap, we detail the workflow of QDF, which first learns $\bSigma$ and then applies it to train forecast models. The principal steps are encapsulated in Algorithm~\ref{algo:2}, which consists of three primary phases as follows.
\begin{itemize}[leftmargin=*]
    \item \textbf{Initialization.} The process begins by initializing $\bSigma$ as an identity matrix. The training set $\mathcal{D}_\mathrm{train}$ is split chronologically into $\mathrm{K}$ non-overlapping subsets  (step 1). This partitioning is crucial for robustness: by updating $\bSigma$ across different data distributions (subsets), we seek for an estimation of $\bSigma$ that is less likely to overfit to any single part of the training data~\citep{reptile}.
    \item \textbf{Weighting matrix learning.}  With the data prepared, we iteratively refine $\bSigma$ by applying  Algorithm \ref{algo:1} sequentially across the $\mathrm{K}$ subsets. The iteration stops when $\bSigma$ converges (i.e., the change between iterations is negligible) or a predefined number of rounds is completed (steps 2-5).
    \item \textbf{Model training.} With the learned weighting matrix $\bSigma$ in hand, the final phase is to train the forecast model $g_\theta$. This is achieved by minimizing the corresponding NLL objective ($\mathcal{L}_{\bSigma}$) over the training set (steps 6-7). In practice, this minimization is performed using standard gradient descent, and the NLL objective can be estimated on mini-batches for computational efficiency.
\end{itemize}
 
By employing $\mathcal{L}_{\bSigma}$ for model training, QDF effectively leverages the weighting matrix $\bSigma$, thereby addressing the two established challenges. Specifically, the off-diagonal elements of $\bSigma^{-1}$ enable the model the autocorrelation effect, and non-uniform diagonals enable heterogeneous weights for each error term. There is no risk of data leakage, as Algorithm~\ref{algo:2} exclusively utilizes the training set. Notably, QDF is model-agnostic, making it a versatile tool for improving the training of various direct forecast models~\citep{itransformer,DLinear,fredformer}.

The strategy of treating $\bSigma$ as learnable parameters is conceptually related to the principles of meta-learning~\citep{fomaml,maml}. However, our work diverges from meta-learning in both goal and implementation. (1) The goal of meta-learning is to enable rapid adaptation to new, dynamic tasks, whereas QDF is designed to construct a static objective for time-series forecasting—specifically accommodating autocorrelation and heterogeneous weights. (2) This difference in goals leads to different validation schemes. Meta-learning validates generalization on a set of new tasks, whereas QDF uses a holdout dataset drawn from the same forecasting task for validation. (3) In time-series analysis,  some studies accommodate meta-learning for model selection~\citep{talagala2023meta}, ensembling~\citep{montero2020fforma}, initialization~\citep{oreshkin2021meta} and domain adaptation~\citep{narwariya2020meta}, whereas QDF aims to obtain a versatile learning objective. To our knowledge, this is a technically innovative strategy.

\section{Experiments}
To demonstrate the efficacy of QDF, there are six aspects that deserve empirical investigation:
\begin{enumerate}[leftmargin=*]
    \item \textbf{Performance:} \textit{How does QDF's perform?} We compare the forecast performance of QDF against state-of-the-art baselines (Section~\ref{sec:overall}) and learning objectives (Section~\ref{sec:compete})? 
    \item \textbf{Gains:} \textit{What makes it effective?} We perform an ablation study (Section~\ref{sec:ablation}) to investigate the contribution of each technical element to its overall performance.
    \item \textbf{Versatility:} \textit{Does it benefit different forecast models?} We compare the performance of DF and QDF using different forecast models (Section~\ref{sec:generalize}), with further results provided in Appendix~\ref{sec:generalize_app}.
    \item \textbf{Flexibility:} \textit{Does the weighting matrix accommodate meta-learning methods?} We attempt to learn the weighting matrix using established meta-learning methods (Section~\ref{sec:generalize}).
    \item \textbf{Sensitivity:} \textit{Is it sensitive to hyperparameters?} We conduct a sensitivity analysis (Section~\ref{sec:hyper}) to show that its effectiveness across a wide range of hyperparameter values.
    \item \textbf{Complexity:} \textit{Is it computational expensive?} We investigate the running time of QDF given different settings (Appendix~\ref{sec:complexity}).
\end{enumerate}

\subsection{Setup}
\paragraph{Datasets.} 
Our experiments are conducted on public datasets for time-series forecasting, consistent with prior works~\citep{Timesnet, itransformer}. The employed datasets include: ETT (consisting of ETTh1, ETTh2, ETTm1, and ETTm2), Electricity (ECL), Weather, and PEMS. For each dataset, we adopt a standard chronological split into training, validation, and testing partitions. Further details on dataset statistics are available in Appendix~\ref{sec:dataset}.

\paragraph{Baselines.} We compare QDF with 10 previous methods, which we categorize into two groups~\citep{wang2025timeo1}: (1) Transformer-based models: PatchTST~\citep{PatchTST}, iTransformer~\citep{itransformer},  Fredformer~\citep{fredformer}, PDF~\citep{pdf} and TQNet~\citep{tqnet}; (2) Non-trainsformer based models:  DLinear~\citep{DLinear}, TiDE~\citep{TiDE}, MICN~\citep{micn}, TimesNet~\citep{Timesnet} and FreTS~\citep{FreTS}.

\paragraph{Implementation.} To ensure a fair evaluation, all baseline models are reproduced using the official codebases~\citep{tqnet}. We train all models with the Adam optimizer~\citep{Adam} to minimize MSE on the training set. Notably, we disable the \textit{drop-last} trick during both training and inference to prevent data leakage and ensure fair comparisons, as suggested by~\citet{qiutfb}. More implementation details are available in Appendix~\ref{sec:reproduce}.

\subsection{Overall performance}\label{sec:overall}

\begin{table}
\centering
\begin{threeparttable}
\caption{Long-term forecasting performance.}\label{tab:longterm}
\renewcommand{\arraystretch}{1} 
\setlength{\tabcolsep}{2.3pt}
\scriptsize
\renewcommand{\multirowsetup}{\centering}
\begin{tabular}{c|c|cc|cc|cc|cc|cc|cc|cc|cc|cc|cc|cc}
    \toprule
    \multicolumn{2}{l}{\multirow{2}{*}{\rotatebox{0}{\scaleb{Models}}}} & 
    \multicolumn{2}{c}{\rotatebox{0}{\scaleb{\textbf{QDF}}}} &
    \multicolumn{2}{c}{\rotatebox{0}{\scaleb{TQNet}}} &
    \multicolumn{2}{c}{\rotatebox{0}{\scaleb{PDF}}} &
    \multicolumn{2}{c}{\rotatebox{0}{\scaleb{Fredformer}}} &
    \multicolumn{2}{c}{\rotatebox{0}{\scaleb{iTransformer}}} &
    \multicolumn{2}{c}{\rotatebox{0}{\scaleb{FreTS}}} &
    \multicolumn{2}{c}{\rotatebox{0}{\scaleb{TimesNet}}} &
    \multicolumn{2}{c}{\rotatebox{0}{\scaleb{MICN}}} &
    \multicolumn{2}{c}{\rotatebox{0}{\scaleb{TiDE}}} &
    \multicolumn{2}{c}{\rotatebox{0}{\scaleb{PatchTST}}} &
    \multicolumn{2}{c}{\rotatebox{0}{\scaleb{DLinear}}} \\
    \multicolumn{2}{c}{} &
    \multicolumn{2}{c}{\scaleb{\textbf{(Ours)}}} & 
    \multicolumn{2}{c}{\scaleb{(2025)}} & 
    \multicolumn{2}{c}{\scaleb{(2024)}} & 
    \multicolumn{2}{c}{\scaleb{(2024)}} & 
    \multicolumn{2}{c}{\scaleb{(2024)}} & 
    \multicolumn{2}{c}{\scaleb{(2023)}} & 
    \multicolumn{2}{c}{\scaleb{(2023)}} &
    \multicolumn{2}{c}{\scaleb{(2023)}} & 
    \multicolumn{2}{c}{\scaleb{(2023)}} & 
    \multicolumn{2}{c}{\scaleb{(2023)}} & 
    \multicolumn{2}{c}{\scaleb{(2023)}} \\
    \cmidrule(lr){3-4} \cmidrule(lr){5-6}\cmidrule(lr){7-8} \cmidrule(lr){9-10}\cmidrule(lr){11-12} \cmidrule(lr){13-14} \cmidrule(lr){15-16} \cmidrule(lr){17-18} \cmidrule(lr){19-20} \cmidrule(lr){21-22} \cmidrule(lr){23-24}
    \multicolumn{2}{l}{\rotatebox{0}{\scaleb{Metrics}}}  & \scalea{MSE} & \scalea{MAE}  & \scalea{MSE} & \scalea{MAE}  & \scalea{MSE} & \scalea{MAE}  & \scalea{MSE} & \scalea{MAE}  & \scalea{MSE} & \scalea{MAE}  & \scalea{MSE} & \scalea{MAE} & \scalea{MSE} & \scalea{MAE} & \scalea{MSE} & \scalea{MAE} & \scalea{MSE} & \scalea{MAE} & \scalea{MSE} & \scalea{MAE} & \scalea{MSE} & \scalea{MAE} \\
    \midrule

\multicolumn{2}{l}{\scalea{ETTm1}}
& \scalea{\bst{0.371}} & \scalea{\bst{0.389}}& \scalea{\subbst{0.376}} & \scalea{\subbst{0.391}}& \scalea{0.387} & \scalea{0.396}& \scalea{0.387} & \scalea{0.398}& \scalea{0.411} & \scalea{0.414}& \scalea{0.414} & \scalea{0.421}& \scalea{0.438} & \scalea{0.430}& \scalea{0.396} & \scalea{0.421}& \scalea{0.413} & \scalea{0.407}& \scalea{0.389} & \scalea{0.400}& \scalea{0.403} & \scalea{0.407} \\
\midrule
\multicolumn{2}{l}{\scalea{ETTm2}}
& \scalea{\bst{0.270}} & \scalea{\bst{0.317}}& \scalea{\subbst{0.277}} & \scalea{\subbst{0.321}}& \scalea{0.283} & \scalea{0.331}& \scalea{0.280} & \scalea{0.324}& \scalea{0.295} & \scalea{0.336}& \scalea{0.316} & \scalea{0.365}& \scalea{0.302} & \scalea{0.334}& \scalea{0.308} & \scalea{0.364}& \scalea{0.286} & \scalea{0.328}& \scalea{0.303} & \scalea{0.344}& \scalea{0.342} & \scalea{0.392} \\
\midrule
\multicolumn{2}{l}{\scalea{ETTh1}}
& \scalea{\bst{0.431}} & \scalea{\bst{0.431}}& \scalea{0.449} & \scalea{0.439}& \scalea{0.452} & \scalea{0.440}& \scalea{\subbst{0.447}} & \scalea{\subbst{0.434}}& \scalea{0.452} & \scalea{0.448}& \scalea{0.489} & \scalea{0.474}& \scalea{0.472} & \scalea{0.463}& \scalea{0.533} & \scalea{0.519}& \scalea{0.448} & \scalea{0.435}& \scalea{0.459} & \scalea{0.451}& \scalea{0.456} & \scalea{0.453} \\
\midrule
\multicolumn{2}{l}{\scalea{ETTh2}}
& \scalea{\bst{0.368}} & \scalea{\bst{0.397}}& \scalea{\subbst{0.375}} & \scalea{0.400}& \scalea{0.375} & \scalea{\subbst{0.399}}& \scalea{0.377} & \scalea{0.402}& \scalea{0.386} & \scalea{0.407}& \scalea{0.524} & \scalea{0.496}& \scalea{0.409} & \scalea{0.420}& \scalea{0.620} & \scalea{0.546}& \scalea{0.378} & \scalea{0.401}& \scalea{0.390} & \scalea{0.413}& \scalea{0.529} & \scalea{0.499} \\
\midrule
\multicolumn{2}{l}{\scalea{ECL}}
& \scalea{\bst{0.165}} & \scalea{\bst{0.257}}& \scalea{\subbst{0.175}} & \scalea{\subbst{0.265}}& \scalea{0.198} & \scalea{0.281}& \scalea{0.191} & \scalea{0.284}& \scalea{0.179} & \scalea{0.270}& \scalea{0.199} & \scalea{0.288}& \scalea{0.212} & \scalea{0.306}& \scalea{0.192} & \scalea{0.302}& \scalea{0.215} & \scalea{0.292}& \scalea{0.195} & \scalea{0.286}& \scalea{0.212} & \scalea{0.301} \\
\midrule
\multicolumn{2}{l}{\scalea{Weather}}
& \scalea{\bst{0.242}} & \scalea{\bst{0.268}}& \scalea{\subbst{0.246}} & \scalea{\subbst{0.270}}& \scalea{0.265} & \scalea{0.283}& \scalea{0.261} & \scalea{0.282}& \scalea{0.269} & \scalea{0.289}& \scalea{0.249} & \scalea{0.293}& \scalea{0.271} & \scalea{0.295}& \scalea{0.264} & \scalea{0.321}& \scalea{0.272} & \scalea{0.291}& \scalea{0.267} & \scalea{0.288}& \scalea{0.265} & \scalea{0.317} \\
\midrule
\multicolumn{2}{l}{\scalea{PEMS03}}
& \scalea{\bst{0.089}} & \scalea{\bst{0.197}}& \scalea{0.119} & \scalea{\subbst{0.217}}& \scalea{0.181} & \scalea{0.286}& \scalea{0.146} & \scalea{0.260}& \scalea{0.122} & \scalea{0.233}& \scalea{0.149} & \scalea{0.261}& \scalea{0.126} & \scalea{0.230}& \scalea{\subbst{0.106}} & \scalea{0.223}& \scalea{0.316} & \scalea{0.370}& \scalea{0.170} & \scalea{0.282}& \scalea{0.216} & \scalea{0.322} \\
\midrule
\multicolumn{2}{l}{\scalea{PEMS08}}
& \scalea{\bst{0.120}} & \scalea{\bst{0.221}}& \scalea{\subbst{0.139}} & \scalea{\subbst{0.240}}& \scalea{0.210} & \scalea{0.301}& \scalea{0.171} & \scalea{0.271}& \scalea{0.149} & \scalea{0.247}& \scalea{0.174} & \scalea{0.275}& \scalea{0.152} & \scalea{0.243}& \scalea{0.153} & \scalea{0.258}& \scalea{0.318} & \scalea{0.378}& \scalea{0.201} & \scalea{0.303}& \scalea{0.249} & \scalea{0.332} \\
    \bottomrule
\end{tabular}
\begin{tablenotes}
    \item  \tiny \textit{Note}:  We fix the input length as 96 following~\cite{itransformer}. \bst{Bold} and \subbst{underlined} denote best and second-best results, respectively. \emph{Avg} indicates average results over forecast horizons: T=96, 192, 336 and 720. QDF employs the top-performing TQNet as its underlying forecast model.
\end{tablenotes}
\end{threeparttable}
\end{table}

In this section, we compare the long-term forecasting results. As shown in Table~\ref{tab:longterm}, integrating QDF yields consistent improvements in forecast accuracy across all evaluated datasets. For instance, on the PEMS08 dataset, QDF achieves a notable reduction in both MSE and MAE by 0.019. We attribute the enhanced performance to QDF's adaptive weighting mechanism, which addresses two critical challenges in objective design: label autocorrelation effect and heterogeneous task weights.

\begin{figure}
\begin{center}
\subfigure[ETTm2 snapshot.]{\includegraphics[width=0.24\linewidth]{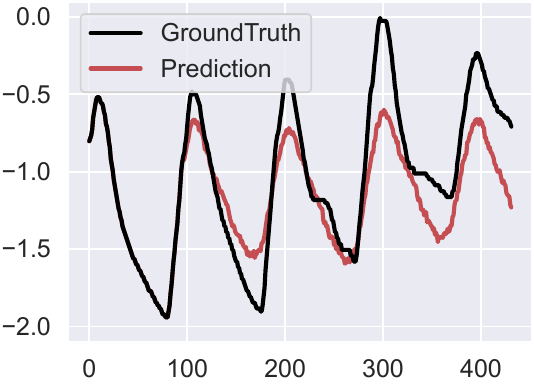}
\includegraphics[width=0.24\linewidth]{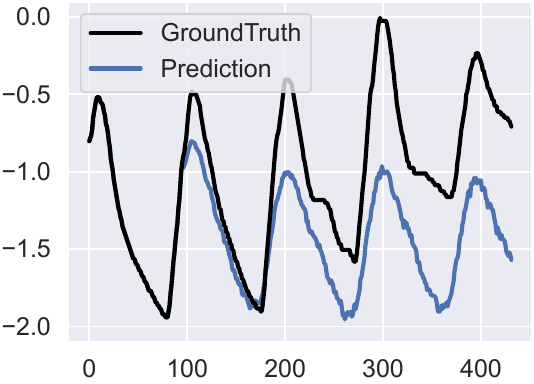}
}
\subfigure[ECL snapshot.]{\includegraphics[width=0.24\linewidth]{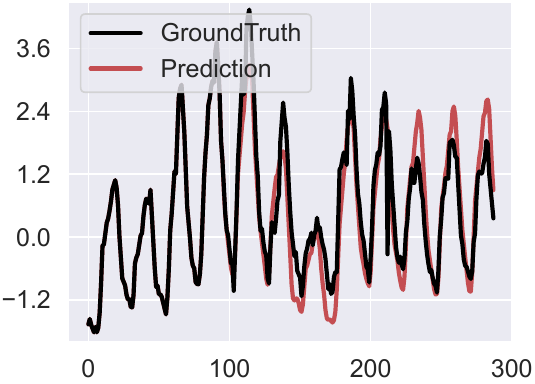}
\includegraphics[width=0.24\linewidth]{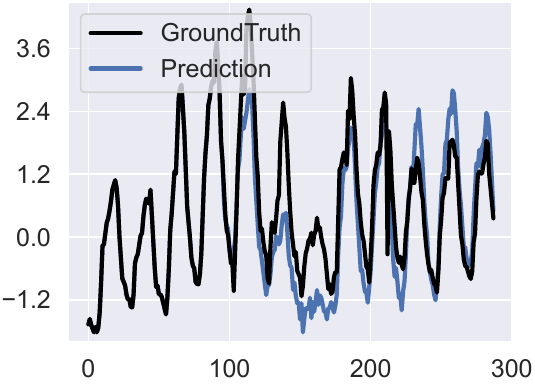}
}
\caption{The forecast sequence of DF (in blue) and QDF (in red), with historical length $\mathrm{H}=96$.}\label{fig:case}
\end{center}
\end{figure}

\paragraph{Examples.}  A qualitative comparison between forecasts generated by DF versus QDF is presented in \autoref{fig:case}. The model trained with DF captures general patterns, but it often fails to model subtle dynamics. For example, on  ETTm2, it struggles to follow a sustained upward trend, and on ECL, it misses a  periodic peak around the 150th step. In contrast, DF accurately captures these subtle patterns, which showcases its practical utility to improve real-world forecast performance.

\subsection{Learning objective comparison}
\label{sec:compete}
\begin{table*}
  \caption{Comparable results with other objectives for time-series forecast.}\label{tab:loss_avg}
  \renewcommand{\arraystretch}{1.2} \setlength{\tabcolsep}{6pt} \scriptsize
  \centering
  \renewcommand{\multirowsetup}{\centering}
  \begin{threeparttable}
  \begin{tabular}{c|l|cc|cc|cc|cc|cc|cc}
    \toprule
    \multicolumn{2}{l}{Loss} & 
    \multicolumn{2}{c}{\textbf{QDF}} &
    \multicolumn{2}{c}{Time-o1} &
    \multicolumn{2}{c}{FreDF} &
    \multicolumn{2}{c}{Koopman} &
    \multicolumn{2}{c}{Soft-DTW} &
    \multicolumn{2}{c}{DF} \\
    \cmidrule(lr){3-4} \cmidrule(lr){5-6}\cmidrule(lr){7-8} \cmidrule(lr){9-10}\cmidrule(lr){11-12}\cmidrule(lr){13-14}
    \multicolumn{2}{l}{Metrics}  & MSE & MAE & MSE & MAE  & MSE & MAE  & MSE & MAE  & MSE & MAE  & MSE & MAE  \\
    \toprule
    
\multirow{4}{*}{{\rotatebox{90}{\scaleb{TQNet}}}}
& ETTm1 & \bst{0.371} & \bst{0.389} & \subbst{0.372} & \subbst{0.390} & 0.375 & 0.390 & 0.595 & 0.499 & 0.387 & 0.394 & 0.376 & 0.391 \\
& ETTh1 & \bst{0.431} & \bst{0.431} & 0.437 & 0.432 & \subbst{0.432} & \subbst{0.432} & 0.451 & 0.442 & 0.453 & 0.438 & 0.449 & 0.439 \\
& ECL & \bst{0.165} & \bst{0.257} & 0.167 & \subbst{0.257} & 0.168 & 0.257 & \subbst{0.166} & 0.258 & 0.623 & 0.524 & 0.175 & 0.265 \\
& Weather & \bst{0.242} & \bst{0.268} & 0.245 & 0.269 & \subbst{0.244} & \subbst{0.268} & 0.282 & 0.306 & 0.255 & 0.276 & 0.246 & 0.270 \\
\midrule
\multirow{4}{*}{{\rotatebox{90}{\scaleb{PDF}}}}
& ETTm1 & \bst{0.381} & \bst{0.394} & \subbst{0.386} & 0.399 & 0.387 & 0.400 & 0.587 & 0.485 & 0.396 & 0.404 & 0.387 & \subbst{0.396} \\
& ETTh1 & \bst{0.436} & \bst{0.429} & 0.438 & 0.438 & \subbst{0.437} & \subbst{0.435} & 0.497 & 0.472 & 0.447 & 0.447 & 0.452 & 0.440 \\
& ECL & \bst{0.194} & 0.277 & 0.195 & \subbst{0.276} & \subbst{0.194} & \bst{0.274} & 0.196 & 0.281 & 0.695 & 0.548 & 0.198 & 0.281 \\
& Weather & \bst{0.259} & \bst{0.281} & \subbst{0.264} & 0.284 & 0.268 & 0.287 & 0.268 & 0.290 & 1.296 & 0.452 & 0.265 & \subbst{0.283} \\
    \bottomrule
  \end{tabular}
  \begin{tablenotes}
    \item  \tiny \textit{Note}:  \bst{Bold} and \subbst{underlined} denote best and second-best results, respectively. The reported results are averaged over forecast horizons: T=96, 192, 336 and 720.
\end{tablenotes}
  \end{threeparttable}
\end{table*}

In this section, we compare QDF against alternative learning objectives. Each objective is integrated into two forecast models: TQNet and PDF, using their official implementations. The results are summarized in Table~\ref{tab:loss_avg}. Overall, methods designed to correct for bias in likelihood estimation, namely FreDF and Time-o1, deliver consistent performance improvements. However, as we established in Section 3.1, these approaches cannot handle the two challenges and yield suboptimal performance.  In contrast, QDF achieves the best performance, with its weighting matrix effectively tackling the two main challenges in objective design: the label autocorrelation effect and heterogeneous task weights.

\subsection{Ablation studies}\label{sec:ablation}

\begin{table}
\caption{Ablation study results.}\label{tab:system_ablation_app}
\setlength{\tabcolsep}{4pt}
\scriptsize
\centering
\begin{threeparttable}
\begin{tabular}{lcclccccccccccccccc}
    \toprule
    \multirow{2}{*}{Model} & \multirow{2}{*}{Hetero.} & \multirow{2}{*}{Auto.} &\multirow{2}{*}{Data} && \multicolumn{2}{c}{T=96} && \multicolumn{2}{c}{T=192} && \multicolumn{2}{c}{T=336} && \multicolumn{2}{c}{T=720} && \multicolumn{2}{c}{Avg} \\
    \cmidrule{6-7} \cmidrule{9-10} \cmidrule{12-13} \cmidrule{15-16} \cmidrule{18-19}
    &&&&& MSE  & MAE && MSE & MAE && MSE & MAE && MSE & MAE && MSE & MAE \\
    \midrule
    
\multirow{4}{*}{DF} & \multirow{4}{*}{\XSolidBrush} & \multirow{4}{*}{\XSolidBrush}
&   ETTm1 && 0.310 & 0.352 && 0.356 & 0.377 && 0.388 & 0.400 && 0.450 & 0.437 && 0.376 & 0.391 \\
&&& ETTh1 && 0.372 & 0.391 && 0.430 & 0.424 && 0.486 & 0.454 && 0.507 & 0.486 && 0.449 & 0.439 \\
&&& ECL && 0.143 & 0.237 && 0.161 & 0.252 && 0.178 & 0.270 && 0.218 & 0.303 && 0.175 & 0.265 \\
&&& Weather && 0.160 & 0.203 && 0.210 & 0.247 && 0.267 & 0.289 && 0.346 & 0.342 && 0.246 & 0.270 \\
\midrule

\multirow{4}{*}{QDF$^\dagger$} & \multirow{4}{*}{\Checkmark} & \multirow{4}{*}{\XSolidBrush}
&   ETTm1 && 0.309 & 0.351 && 0.354 & 0.378 && 0.387 & 0.401 && 0.450 & 0.439 && 0.375 & 0.392 \\
&&& ETTh1 && 0.372 & 0.394 && 0.432 & 0.424 && \subbst{0.475} & \bst{0.445} && 0.494 & 0.481 && 0.443 & 0.436 \\
&&& ECL && \subbst{0.135} & \subbst{0.230} && 0.154 & 0.246 && \subbst{0.170} & \subbst{0.263} && 0.203 & 0.293 && \subbst{0.166} & \subbst{0.258} \\
&&& Weather && \subbst{0.159} & \subbst{0.202} && \subbst{0.208} & \subbst{0.246} && \subbst{0.265} & \subbst{0.287} && 0.344 & 0.341 && \subbst{0.244} & \subbst{0.269} \\
\midrule

\multirow{4}{*}{QDF$^\ddagger$} & \multirow{4}{*}{\XSolidBrush} & \multirow{4}{*}{\Checkmark}
&   ETTm1 && \subbst{0.308} & \subbst{0.351} && \subbst{0.353} & \subbst{0.377} && \subbst{0.385} & \subbst{0.399} && \subbst{0.443} & \subbst{0.436} && \subbst{0.372} & \subbst{0.391} \\
&&& ETTh1 && \subbst{0.369} & \subbst{0.391} && \subbst{0.430} & \subbst{0.422} && 0.477 & \subbst{0.447} && \subbst{0.492} & \subbst{0.475} && \subbst{0.442} & \subbst{0.434} \\
&&& ECL && 0.136 & 0.230 && \subbst{0.153} & \subbst{0.245} && 0.171 & 0.264 && \subbst{0.203} & \subbst{0.292} && 0.166 & 0.258 \\
&&& Weather && 0.159 & 0.202 && 0.210 & 0.247 && 0.266 & 0.289 && \subbst{0.343} & \subbst{0.340} && 0.245 & 0.269 \\
\midrule

\multirow{4}{*}{QDF} & \multirow{4}{*}{\Checkmark} & \multirow{4}{*}{\Checkmark}
&   ETTm1 && \bst{0.307} & \bst{0.349} && \bst{0.352} & \bst{0.376} && \bst{0.383} & \bst{0.398} && \bst{0.441} & \bst{0.434} && \bst{0.371} & \bst{0.389} \\
&&& ETTh1 && \bst{0.365} & \bst{0.389} && \bst{0.427} & \bst{0.421} && \bst{0.466} & 0.449 && \bst{0.466} & \bst{0.467} && \bst{0.431} & \bst{0.431} \\
&&& ECL && \bst{0.135} & \bst{0.229} && \bst{0.153} & \bst{0.245} && \bst{0.169} & \bst{0.262} && \bst{0.202} & \bst{0.291} && \bst{0.165} & \bst{0.257} \\
&&& Weather && \bst{0.158} & \bst{0.201} && \bst{0.207} & \bst{0.245} && \bst{0.263} & \bst{0.286} && \bst{0.342} & \bst{0.339} && \bst{0.242} & \bst{0.268} \\
    \bottomrule
\end{tabular}
\begin{tablenotes}
    \item  \tiny \textit{Note}:  \bst{Bold} and \subbst{underlined} denote best and second-best results, respectively. “Hetero.” and “Auto.” are abbreviations for heterogeneous task weight and label autocorrelation effect, respectively.
\end{tablenotes}
\end{threeparttable}
\end{table}

In this section, we examine the technical components within QDF that address the two key challenges of learning objective design and assess their individual contributions to forecast performance. The results are presented in \autoref{tab:system_ablation_app}, with key observations as follows:
\begin{itemize}[leftmargin=*]
    \item QDF$^\dagger$ enhances DF by enabling heterogeneous task weights. Specifically, this variant follows the QDF procedure but sets the off-diagonal elements of the weighting matrix to zero while allowing the diagonal elements to be learned. It consistently outperforms DF, indicating that assigning heterogeneous weights to different forecast tasks can improve performance.
    \item QDF$^\ddagger$ improves DF by modeling label autocorrelation effects. Specifically, it fixes the diagonal elements of the weighting matrix to one, while learning the off-diagonal elements. It also surpasses DF, achieving the second-best results overall. This highlights the benefit of modeling autocorrelation effects in the learning objective for forecasting performance.
    \item QDF integrates both factors above and achieves the best performance, demonstrating the synergistic effect of addressing both heterogeneous task weights and label autocorrelation.
\end{itemize}

\subsection{Generalization studies}\label{sec:generalize}
In this section, we explore the versatility of QDF as a model-agnostic enhancement. To this end, we integrate it into different forecast models: TQNet, PDF, FredFormer and iTransformer. 
The results in \autoref{fig:backbone} show that QDF delivers consistent performance gains across all evaluated  models. For example, on the ECL dataset, augmenting FredFormer and TQNet with QDF reduced their MSE by 7.4\% and 5.9\%, respectively. This consistent ability to elevate the performance of various models underscores QDF's versatility for improving time-series forecast performance.

\begin{figure}
\begin{center}
\subfigure[ECL with MSE]{\includegraphics[width=0.24\linewidth]{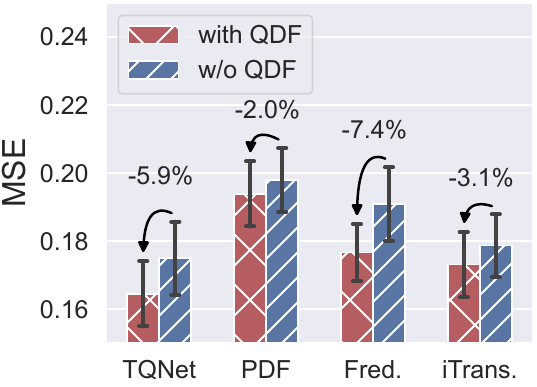}}
\subfigure[ECL with MAE]{\includegraphics[width=0.24\linewidth]{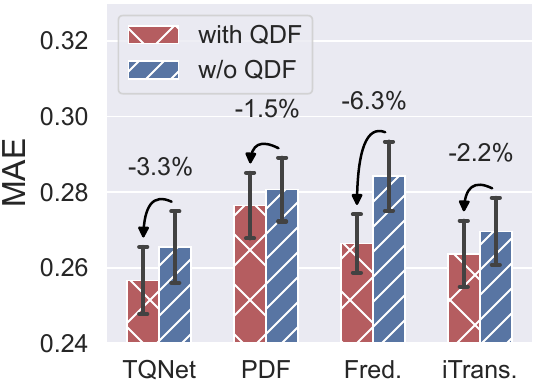}}
\subfigure[Weather with MSE]{\includegraphics[width=0.24\linewidth]{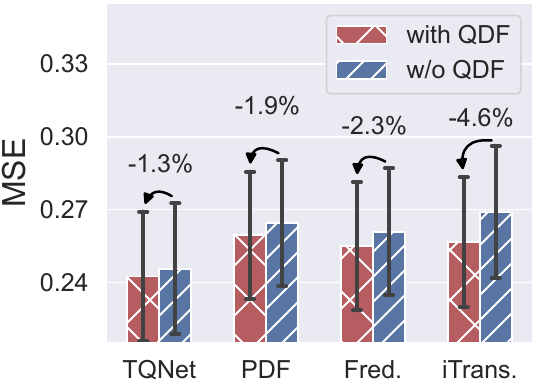}}
\subfigure[Weather with MAE]{\includegraphics[width=0.24\linewidth]{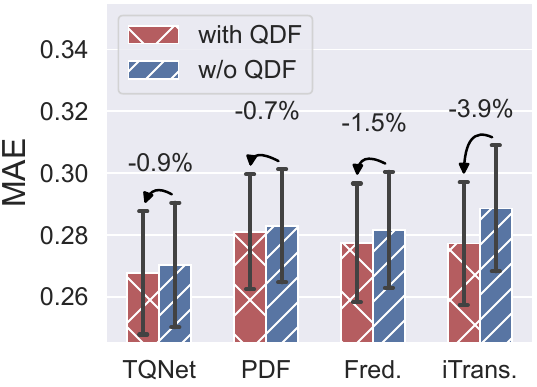}}
\caption{Improvement of QDF applied to different forecast models, shown with colored bars for means over forecast lengths (96, 192, 336, 720) and error bars for 50\% confidence intervals. }
\label{fig:backbone}
\end{center}
\end{figure}

\subsection{Flexibility studies}\label{sec:flexible}

\begin{table}
\caption{Comparison with meta-learning methods on ECL dataset.}\label{tab:trans_avg}
\setlength{\tabcolsep}{3.5pt}
\scriptsize
\centering
\begin{threeparttable}
\begin{tabular}{l|ll|ll|ll|ll}
    \toprule
    \multirow{2}{*}{Method} & \multicolumn{2}{c|}{T=96} & \multicolumn{2}{c|}{T=192} & \multicolumn{2}{c|}{T=336} & \multicolumn{2}{c}{T=720} \\
    \cmidrule(lr){2-3} \cmidrule(lr){4-5}\cmidrule(lr){6-7}\cmidrule(lr){8-9}
    & MSE  & MAE & MSE & MAE & MSE & MAE & MSE & MAE \\
    \midrule

DF & 0.143 & 0.237 & 0.161 & 0.252 & 0.178 & 0.270 & 0.218 & 0.303 \\
iMAML & 0.135\textcolor{pink}{$_{5.74\%\downarrow}$} & 0.230\textcolor{pink}{$_{3.26\%\downarrow}$} & \subbst{0.154}\textcolor{pink}{$_{4.31\%\downarrow}$} & \subbst{0.246}\textcolor{pink}{$_{2.55\%\downarrow}$} & 0.170\textcolor{pink}{$_{4.48\%\downarrow}$} & 0.263\textcolor{pink}{$_{2.47\%\downarrow}$} & 0.205\textcolor{pink}{$_{5.90\%\downarrow}$} & 0.293\textcolor{pink}{$_{3.36\%\downarrow}$} \\
MAML & 0.136\textcolor{pink}{$_{5.54\%\downarrow}$} & 0.230\textcolor{pink}{$_{3.20\%\downarrow}$} & 0.154\textcolor{pink}{$_{4.24\%\downarrow}$} & 0.246\textcolor{pink}{$_{2.47\%\downarrow}$} & 0.170\textcolor{pink}{$_{4.71\%\downarrow}$} & 0.263\textcolor{pink}{$_{2.56\%\downarrow}$} & 0.205\textcolor{pink}{$_{5.65\%\downarrow}$} & 0.293\textcolor{pink}{$_{3.09\%\downarrow}$} \\
MAML++ & \subbst{0.135}\textcolor{pink}{$_{5.76\%\downarrow}$} & \subbst{0.229}\textcolor{pink}{$_{3.33\%\downarrow}$} & 0.154\textcolor{pink}{$_{4.22\%\downarrow}$} & 0.246\textcolor{pink}{$_{2.49\%\downarrow}$} & \subbst{0.170}\textcolor{pink}{$_{4.72\%\downarrow}$} & \subbst{0.263}\textcolor{pink}{$_{2.65\%\downarrow}$} & \subbst{0.204}\textcolor{pink}{$_{6.41\%\downarrow}$} & \subbst{0.292}\textcolor{pink}{$_{3.67\%\downarrow}$}  \\
Reptile & 0.136\textcolor{pink}{$_{5.06\%\downarrow}$} & 0.230\textcolor{pink}{$_{2.90\%\downarrow}$} & 0.155\textcolor{pink}{$_{3.73\%\downarrow}$} & 0.247\textcolor{pink}{$_{2.14\%\downarrow}$} & 0.171\textcolor{pink}{$_{3.91\%\downarrow}$} & 0.264\textcolor{pink}{$_{2.07\%\downarrow}$} & 0.206\textcolor{pink}{$_{5.36\%\downarrow}$} & 0.294\textcolor{pink}{$_{2.96\%\downarrow}$}  \\
QDF & \bst{0.135}\textcolor{pink}{$_{6.10\%\downarrow}$} & \bst{0.229}\textcolor{pink}{$_{3.63\%\downarrow}$} & \bst{0.153}\textcolor{pink}{$_{4.76\%\downarrow}$} & \bst{0.245}\textcolor{pink}{$_{2.82\%\downarrow}$} & \bst{0.169}\textcolor{pink}{$_{5.14\%\downarrow}$} & \bst{0.262}\textcolor{pink}{$_{2.71\%\downarrow}$} & \bst{0.202}\textcolor{pink}{$_{7.37\%\downarrow}$} & \bst{0.290}\textcolor{pink}{$_{4.09\%\downarrow}$} \\
    \bottomrule
\end{tabular}
\begin{tablenotes}
    \item  \tiny \textit{Note}:  \bst{Bold} and \subbst{underlined} denote best and second-best results, respectively. The subscript denotes the relative error reduction compared with DF.
\end{tablenotes}
\end{threeparttable}
\end{table}

In this section, we explore the flexible implementation of QDF. Since the weighting matrix in QDF is treated as a set of learnable parameters, it is natural to investigate whether established meta-learning algorithms can be used to optimize it. To this end, we examine several representative meta-learning methods, including MAML~\citep{maml}, iMAML~\citep{imaml}, MAML++\citep{maml++}, and Reptile\citep{reptile}. Overall, all these methods outperform the canonical DF approach that sets the weighting matrix as an identity matrix, thereby demonstrating the flexibility of QDF’s implementation. However, these methods do not explicitly optimize the weighting matrix for out-of-sample generalization, which is a distinct advantage of our implementation that benefits forecast performance.

\subsection{Hyperparameter sensitivity}\label{sec:hyper}

\begin{figure}
\begin{center}
\includegraphics[width=0.32\linewidth]{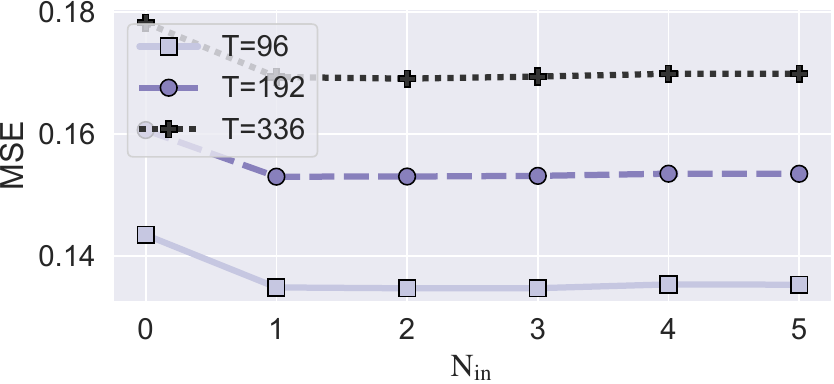}
\includegraphics[width=0.32\linewidth]{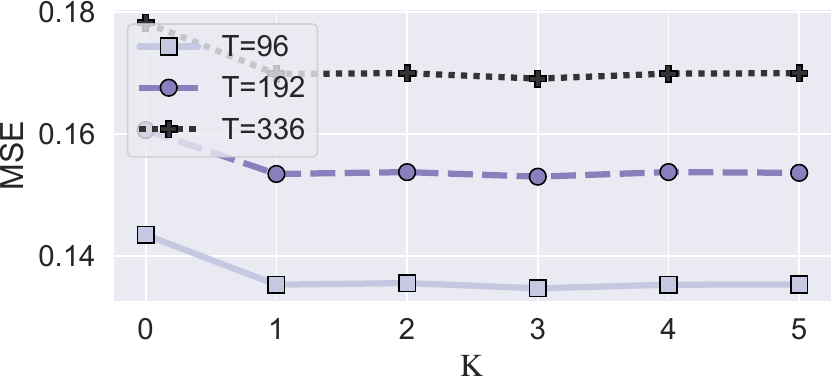}
\includegraphics[width=0.32\linewidth]{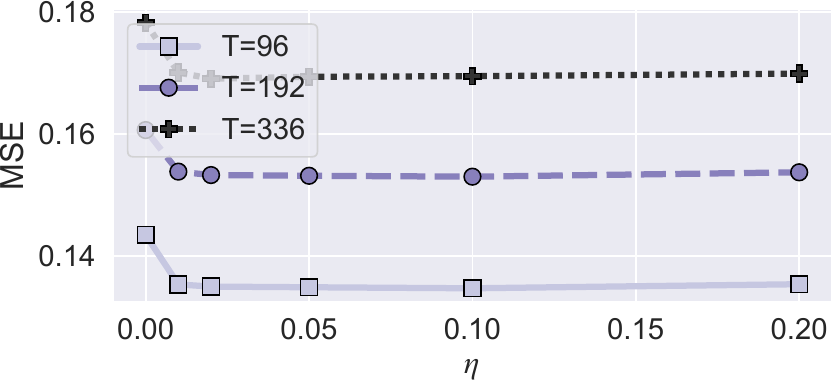}
\subfigure[The round of inner update ($\mathrm{N_{in}}$).]{\includegraphics[width=0.32\linewidth]{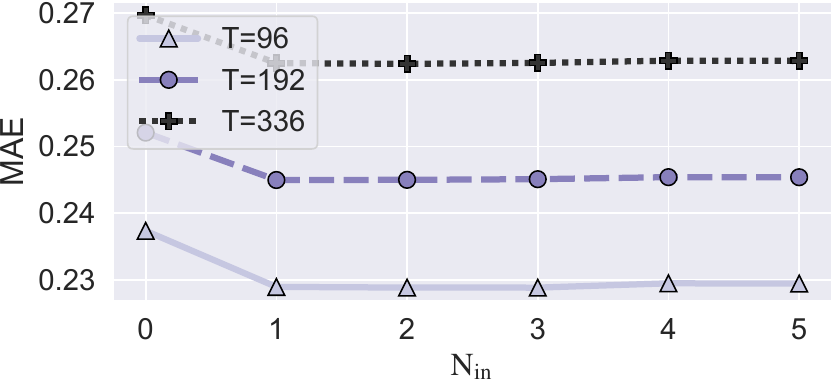}}
\subfigure[The number of splits ($\mathrm{K}$).]{\includegraphics[width=0.32\linewidth]{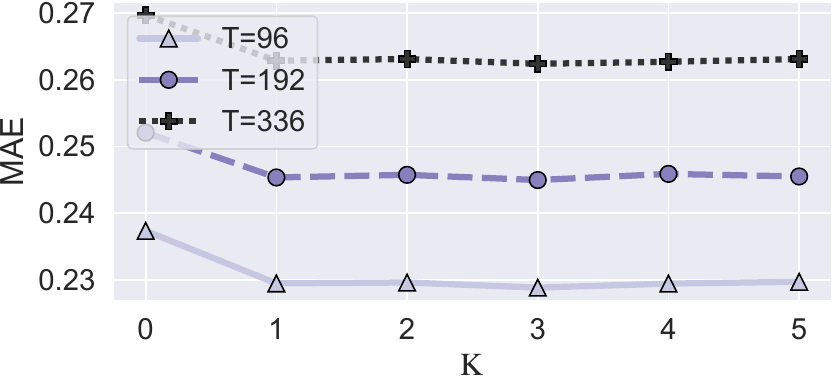}}
\subfigure[The update rate ($\eta$).]{\includegraphics[width=0.32\linewidth]{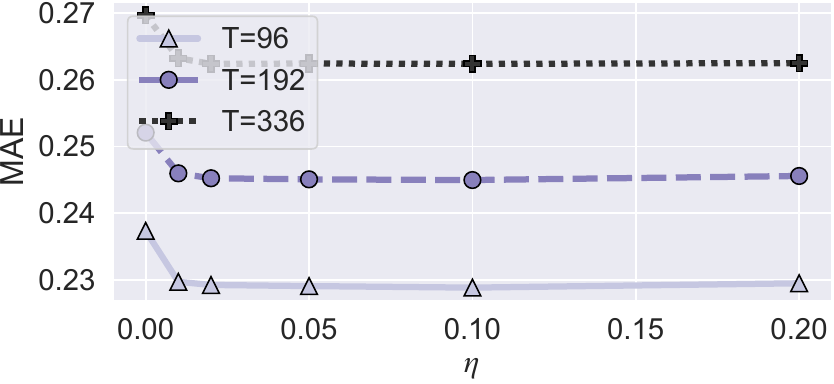}}
\caption{Impact of hyperparameters on the performance of QDF. }
\label{fig:hparam}
\end{center}
\end{figure}

In this section, we examine the impact of key hyperparameters on QDF's performance, with results shown in \autoref{fig:hparam}. The main observations are as follows:
\begin{itemize}[leftmargin=*]
    \item The coefficient $\mathrm{N_{in}}$ determines the number of inner-loop updates in Algorithm~\ref{algo:2}. We observe that increasing $\alpha$ from 0 to 1 significantly improves forecasting accuracy. Further increases bring marginal gains, suggesting that the forecast model's performance after one-step update already provides valuable signals to guide the weighting matrix update.
    \item The coefficient $\mathrm{K}$ determines the number of data splits in Algorithm~\ref{algo:2}. The best performance is achieved when $\mathrm{K}=3$, indicating that splitting the data enhances the generalization ability of the learned weighting matrix. Increasing it further leads to diminishing returns, as the sample size per split becomes too small to be informative given large values of $\mathrm{K}$.
    \item The coefficient $\eta$ determines the update rate in Algorithm~\ref{algo:2}, where setting it to zero immediately reduces the method to the DF baseline. In general, using $\eta>0$ to update the weighting matrix $\eta>0$ effectively improves performance, and the improvement is robust to a wide range of $\eta$ values.
\end{itemize}

\section*{Conclusion}
In this study, we identify two key challenges in designing learning objectives for forecast models: the label autocorrelation effect and heterogeneous task weights. We show that existing methods fail to address both challenges, resulting in suboptimal performance.  To fill this gap, we introduce a novel quadratic-form weighted training objective that simultaneously tackles these issues. To exploit this objective, we propose a QDF learning algorithm, which trains the forecast model using the quadratic objective with an adaptively updated weighting matrix. Experimental results demonstrate that QDF consistently enhances the performance of various forecasting models.

\textit{\textbf{Limitations \& future works.}} 
While this study focuses on the challenges of label correlation and heterogeneous task weights within time-series forecasting, similar issues arise in other tasks such as user rating prediction and dense image prediction. Consequently, extending the proposed QDF to these related fields presents a valuable direction for future investigation. Furthermore, a limitation of the current QDF is its reliance on a fixed quadratic objective, parameterized by a static weighting matrix. While being well motivated, this structure offers limited flexibility. A promising enhancement would be to employ a hyper-network to generate the learning objective, which yields a more adaptable and expressive formulation, potentially leading to further performance gains.

\section*{Reproducibility statement}

The anonymous downloadable source code is available at ~\url{https://anonymous.4open.science/r/QDF-8937}.
For theoretical results, a complete proof of the claims is included in the Appendix~\ref{sec:theory}; 
For datasets used in the experiments, a complete description of the dataset statistics and processing workflow is provided in Appendix~\ref{sec:reproduce}.

\bibliography{arxiv}
\bibliographystyle{plainnat}

\clearpage
\appendix

\section{On the Label Autocorrelation Estimation Details}

In this section, we introduce the procedure for estimating the label autocorrelation in \autoref{fig:auto}. A primary challenge in this estimation is accounting for the confounding influence of the historical input sequence, $\X$~\citep{,wang2025toiswbm,li2024icmlrelaxing,li2024kdddebiased}. A direct correlation between labels at different time steps, such as $\Y_t$ and $\Y_{t^\prime}$, may not exist. However, failing to control for the common influence of $\X$ can introduce spurious correlations~\citep{wang2023nipsescfr,wang2025kddcfrpro}, leading to a biased estimation~\citep{wang2024tifsescm,li2024nipsremoving}. Consequently, standard metrics like the Pearson correlation coefficient are inadequate for this task, as they are unable to isolate the relationship between $\Y_t$ and $\Y_{t^\prime}$ from the spurious correlations.

To overcome this limitation, we utilize the partial correlation coefficient to provide a proxy of label autocorrelation. Our approach mirrors MATLAB's `partialcorr` function\footnote{The official implementation is detailed at \url{https://www.mathworks.com/help/stats/partialcorr.html}.}. Specifically, to compute the partial correlation between two points in the label sequence, $\Y_t$ and $\Y_{t^\prime}$, while conditioning on the historical sequence $\X$ (the control variables), we employ a two-stage regression process. First, we fit two separate linear regression models using ordinary least squares (OLS) to predict $\Y_t$ and $\Y_{t^\prime}$ from $\X$. The resulting residuals, $\epsilon_t$ and $\epsilon_{t^\prime}$, represent the variance in $\Y_t$ and $\Y_{t^\prime}$ that is not explained by $\X$. The partial correlation is then computed as the standard Pearson correlation between these two sets of residuals, $\rho(\epsilon_t, \epsilon_{t^\prime})$. This procedure effectively quantifies the linear relationship between $\Y_t$ and $\Y_{t^\prime}$ after factoring out the confounding influence of the historical context.

\begin{figure*}[h!]
\subfigure[Partial correlation coefficients between different steps in the raw label sequence.]{\includegraphics[width=0.23\linewidth]{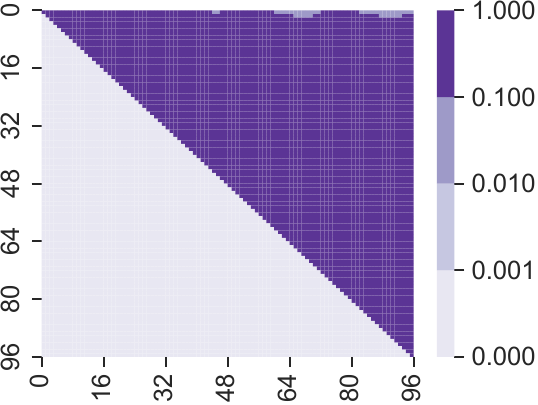}\quad \includegraphics[width=0.23\linewidth]{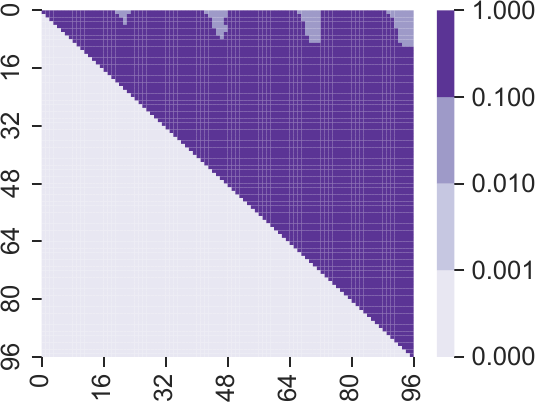}\quad \includegraphics[width=0.23\linewidth]{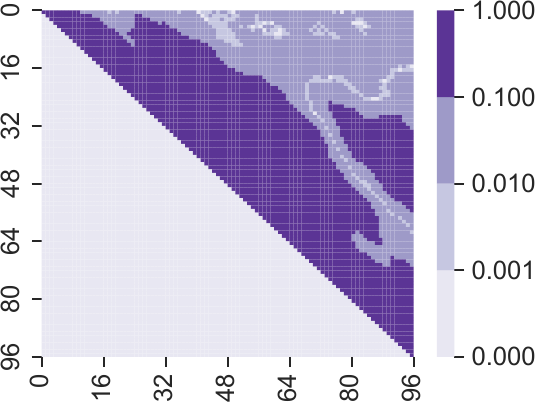}\quad \includegraphics[width=0.23\linewidth]{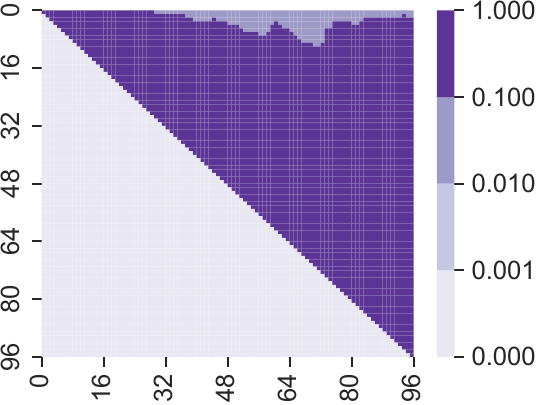}}
\subfigure[Partial correlation coefficients between different components obtained by FreDF.]{\includegraphics[width=0.23\linewidth]{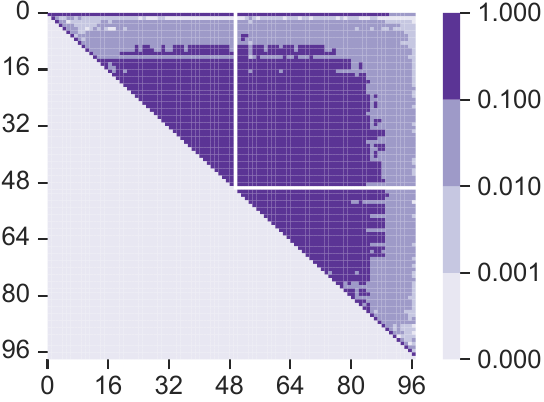}\quad \includegraphics[width=0.23\linewidth]{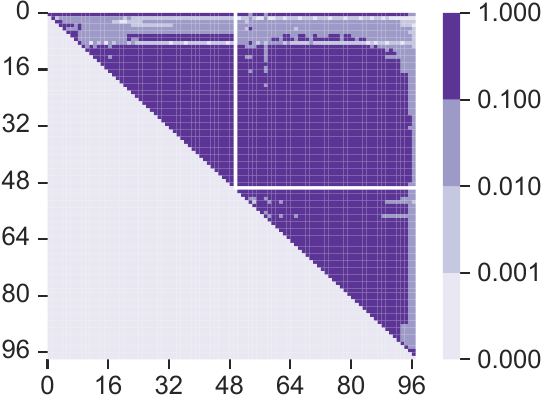}\quad \includegraphics[width=0.23\linewidth]{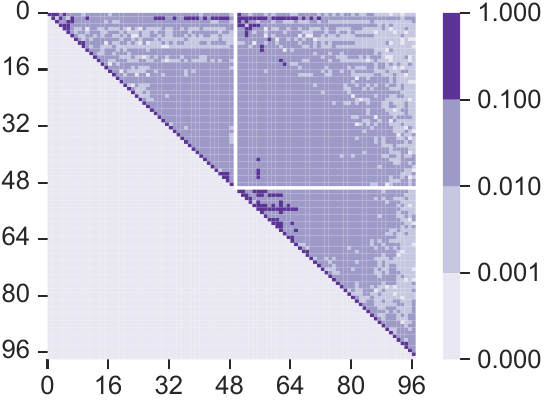}\quad \includegraphics[width=0.23\linewidth]{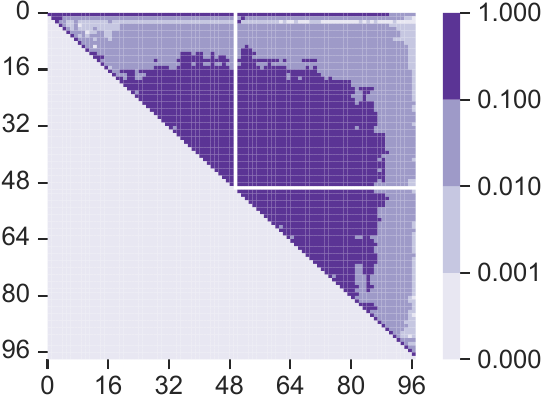}}
\subfigure[Partial correlation coefficients between different components obtained by Time-o1.]{\includegraphics[width=0.23\linewidth]{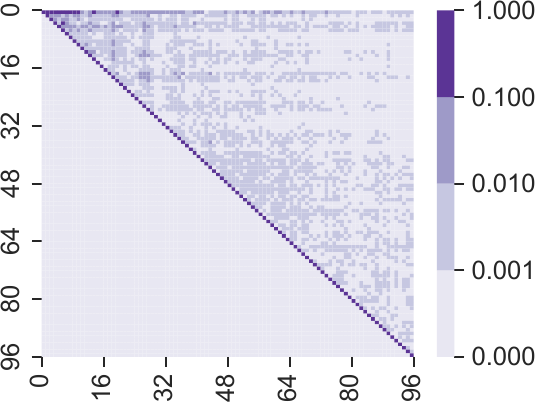}\quad \includegraphics[width=0.23\linewidth]{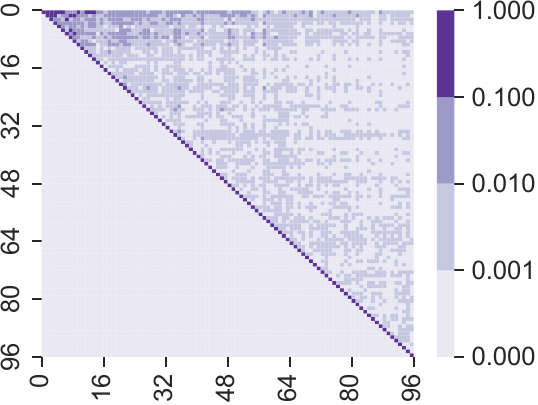}\quad \includegraphics[width=0.23\linewidth]{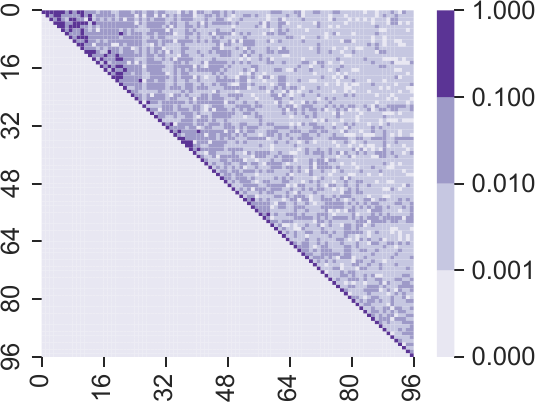}\quad \includegraphics[width=0.23\linewidth]{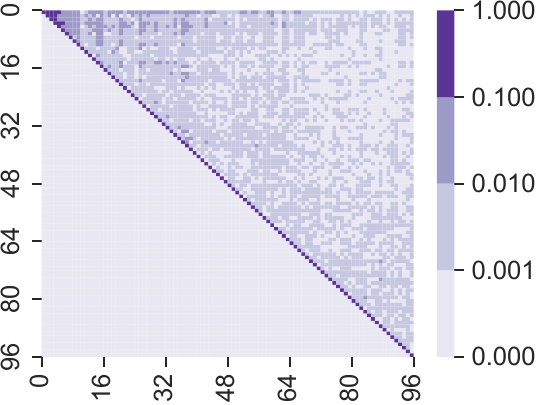}}
\caption{The label autocorrelation effect on the original label sequence and the components extracted by FreDF and Time-o1 \citep{wang2025timeo1,wang2025iclrfredf}. The datasets are ETTh1, ETTh2, ECL, and Weather from left to right. The forecast length is uniformly set to 96. }
\label{fig:autoapp}
\end{figure*}


To further validate the observations from the case study in \autoref{fig:auto}, we extend the analysis on four additional datasets. As illustrated in \autoref{fig:autoapp}, the partial correlation matrices corresponding to the raw labels display significant off-diagonal values across multiple datasets. This pattern provides strong evidence for the widespread presence of label autocorrelation. In contrast, while the latent components extracted by methods such as FreDF and Time-o1~\citep{wang2025iclrfredf,wang2025timeo1} show a marked reduction in these off-diagonal correlations, they do not succeed in eliminating them entirely. The persistence of these residual values suggests that these methods only partially eliminate the autocorrelation effect. Therefore, directly applying point-wise error (such as MSE or MAE) on the obtained components yields bias due to the oversight of residual autocorrelation effect.

One might advocate for directly estimating the conditional covariance from data statistically. However, this approach is generally intractable due to its prohibitive computational complexity. Specifically, to estimate the partial correlation between each pair of time steps $t$ and $t^\prime$, two OLS problems must be solved over the entire dataset. The scale of each OLS problem grows rapidly with the length of the historical sequence and the number of covariates. Worse still, the overall complexity increases quadratically with the forecast horizon. For example, if the forecast length $\T=720$, computing the full partial correlation matrix requires estimating $720\times720$ partial correlations. In our case study, we mitigate this complexity by subsampling only 5,000 examples from each dataset, restricting the historical sequence length to 8, and limiting the forecast horizon to 96. This reduction makes the estimation tractable and affordable at the cost of accuracy, which is acceptable since the estimated results are used solely for the case study rather than for model training.

\section{Theoretical Justification}\label{sec:theory}

\begin{theorem}[Likelihood formulation, Theorem~\ref{thm:like} in the main text]
Given historical sequence $\boldsymbol{X}$, let $\Y\in\mathbb{R}^\mathrm{T}$ be the associated label sequence and $g_\theta(\X)\in\mathbb{R}^\mathrm{T}$ be the forecast sequence. Assuming the label sequence given $\X$ follow a multivariate Gaussian distribution, the NLL of the label sequence, omitting constant terms, is:
\begin{equation}
    \mathcal{L}_\mathrm{\bSigma}(\X,\Y;g_\theta)=\left\|\Y-g_\theta(\X)\right\|_{\bSigma^{-1}}^2=\left(\Y-g_\theta(\X)\right)^\top \bSigma^{-1} (\Y-g_\theta(\X)),
\end{equation}
where $\bSigma\in\mathbb{R}^{\T\times\T}$ is the conditional covariance of the label sequence given $\X$.
\end{theorem}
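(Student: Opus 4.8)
The plan is to derive the quadratic form directly from the density of a multivariate Gaussian, treating the forecast error $\boldsymbol{e} \coloneqq \Y - g_\theta(\X)$ as the random vector of interest. First I would make the modeling assumption explicit: conditioned on $\X$, the label sequence $\Y$ is multivariate Gaussian with mean $g_\theta(\X)$ and covariance $\bSigma$, so that $\boldsymbol{e} \sim \mathcal{N}(\boldsymbol{0}, \bSigma)$ with $\bSigma \in \mathbb{R}^{\T\times\T}$ symmetric positive definite (hence invertible). The mean being $g_\theta(\X)$ is the natural choice since the forecast model is meant to predict the conditional mean, which also makes the errors zero-mean.

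Next I would write out the Gaussian density
\begin{equation}
p(\Y \mid \X) = \frac{1}{(2\pi)^{\T/2}\,\lvert\bSigma\rvert^{1/2}} \exp\!\left(-\tfrac{1}{2}\,\boldsymbol{e}^\top \bSigma^{-1} \boldsymbol{e}\right),
\end{equation}
and then take the negative logarithm:
\begin{equation}
-\log p(\Y \mid \X) = \tfrac{1}{2}\,\boldsymbol{e}^\top \bSigma^{-1} \boldsymbol{e} + \tfrac{1}{2}\log\lvert\bSigma\rvert + \tfrac{\T}{2}\log(2\pi).
\end{equation}
The final step is to identify which terms are ``constant'' in the sense relevant to optimizing $\theta$. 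The normalization term $\tfrac{\T}{2}\log(2\pi)$ is an additive constant independent of both $\theta$ and $\bSigma$, and the log-determinant $\tfrac{1}{2}\log\lvert\bSigma\rvert$ is independent of $\theta$; dropping these and absorbing the harmless multiplicative factor $\tfrac{1}{2}$ yields exactly $\mathcal{L}_{\bSigma}(\X,\Y;g_\theta) = \boldsymbol{e}^\top \bSigma^{-1} \boldsymbol{e} = \lVert \Y - g_\theta(\X)\rVert_{\bSigma^{-1}}^2$, recovering the stated weighted-norm expression and matching the notation $\lVert\cdot\rVert_{\bSigma^{-1}}$ for the Mahalanobis norm.

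The content here is genuinely elementary, so the main thing to get right is not a hard calculation but the precise scope of ``omitting constant terms.'' I would be careful to flag that the log-determinant term is constant only with respect to $\theta$, not with respect to $\bSigma$; this matters because the bilevel formulation in Definition~\ref{def:sigma} later optimizes over $\bSigma$ in the outer loop. Strictly speaking, a fully faithful NLL-in-$\bSigma$ would retain $\tfrac{1}{2}\log\lvert\bSigma\rvert$, and I would note that the paper's objective $\mathcal{L}_{\bSigma}$ is the version appropriate for the inner ($\theta$) optimization, with the outer update driven by generalization on $\mathcal{D}_{\mathrm{out}}$ rather than by the full likelihood. The only technical hypothesis worth stating cleanly is positive-definiteness of $\bSigma$ guaranteeing $\bSigma^{-1}$ exists, which is consistent with the later Cholesky reparameterization $\bSigma = \boldsymbol{L}\boldsymbol{L}^\top$ with positive diagonal.
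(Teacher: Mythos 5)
Your proof is correct and follows essentially the same route as the paper's: write the conditional Gaussian density with mean $g_\theta(\X)$ and covariance $\bSigma$, take the negative logarithm, and drop the terms not depending on $g_\theta$ (the paper likewise discards the $\tfrac{1}{2}\log\lvert\bSigma\rvert$ and $\tfrac{\T}{2}\log(2\pi)$ terms and the factor $\tfrac{1}{2}$). Your added remark that $\log\lvert\bSigma\rvert$ is constant only with respect to $\theta$, not $\bSigma$, is a correct and worthwhile clarification of the scope of ``omitting constant terms,'' but it does not change the argument.
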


\begin{proof}
The proof follows the standard derivation of negative log-likelihood given Gaussian assumption. 
Suppose the label sequence given $\X$ follows a multivariate normal distribution with mean vector $g_\theta(\X)$ and covariance matrix $\bSigma$.
The conditional likelihood of $\Y$ is:
\begin{equation}
    \mathbb{P}_{\Y|\X}=\frac{1}{(2\pi)^\mathrm{0.5T}|\bSigma|^{0.5}}\exp(-\frac{1}{2}\left\|\Y-g_\theta(\X)\right\|_{\bSigma^{-1}}^2)
\end{equation}
On the basis, the conditional negative log-likelihood of $\Y$ is:
\begin{equation*}
    -\log \mathbb{P}_{\Y|\X} = \frac{1}{2} \left( \T\log(2\pi) + \log|\bSigma| + \left\|\Y-g_\theta(\X)\right\|_{\bSigma^{-1}}^2 \right).
\end{equation*}
Removing the terms unrelated to $g_\theta$, the terms used for updating $\theta$ is expressed as follows:
\begin{equation}
    \mathcal{L}_\mathrm{\bSigma}(\X,\Y;g_\theta) =  \left\|\Y-g_\theta(\X)\right\|_{\bSigma^{-1}}^2.
\end{equation}
The proof is therefore completed.
\end{proof}

\section{Reproduction Details}\label{sec:reproduce}
\subsection{Dataset descriptions}\label{sec:dataset}
\begin{table}
  \caption{Dataset description. }\label{tab:dataset}
  \centering
  \renewcommand{\multirowsetup}{\centering}
  \setlength{\tabcolsep}{10pt}
  \small
    \begin{threeparttable}
  \begin{tabular}{llllll}
    \toprule
    Dataset & D & Forecast length & Train / validation / test & Frequency& Domain \\
    \toprule
     ETTh1 & 7 & 96, 192, 336, 720 & 8545/2881/2881 & Hourly & Health\\
     \midrule
     ETTh2 & 7 & 96, 192, 336, 720 & 8545/2881/2881 & Hourly & Health\\
     \midrule
     ETTm1 & 7 & 96, 192, 336, 720 & 34465/11521/11521 & 15min & Health\\
     \midrule
     ETTm2 & 7 & 96, 192, 336, 720 & 34465/11521/11521 & 15min & Health\\
    \midrule
    Weather & 21 & 96, 192, 336, 720 & 36792/5271/10540 & 10min & Weather\\
    \midrule
    ECL & 321 & 96, 192, 336, 720 & 18317/2633/5261 & Hourly & Electricity \\
    \midrule
    PEMS03 & 358 & 12, 24, 36, 48 & 15617/5135/5135 & 5min & Transportation\\
    \midrule
    PEMS08 & 170 & 12, 24, 36, 48 & 10690/3548/265 & 5min & Transportation\\
    \bottomrule
    \end{tabular}
    \begin{tablenotes}
    \item  \scriptsize \textit{Note}:  \textit{D} denotes the number of variates. \emph{Frequency} denotes the sampling interval of time points. \emph{Train, Validation, Test} denotes the number of samples employed in each split. The taxonomy aligns with~\citep{Timesnet}.
    \end{tablenotes}
    \end{threeparttable}
\end{table}

Our empirical evaluation is conducted on a diverse collection of widely-used time-series benchmarks, with their key properties summarized in \autoref{tab:dataset}. These include:
\begin{itemize}[leftmargin=*]
    \item \textbf{ETT}~\citep{Informer}: Electricity transformer data consisting of four subsets with varied temporal resolutions (ETTh1/ETTh2 at 1-hour intervals, ETTm1/ETTm2 at 15-minute intervals).
    \item \textbf{Weather}~\citep{Autoformer}: Comprises 21 meteorological indicators recorded every 10 minutes from the Max Planck Institute.
    \item \textbf{ECL}~\citep{Autoformer}: Hourly electricity consumption data from 321 clients.
    \item \textbf{PEMS}~\citep{SCINet}: California traffic data aggregated in 5-minute windows. We utilize the PEMS03 and PEMS08 subsets.
\end{itemize}
For all datasets, we adopt a standard chronological split into training, validation, and testing sets, following established protocols~\citep{qiutfb, itransformer}. We standardize the input sequence length to 96 for the ETT, Weather, and ECL datasets, evaluating on forecast horizons of $\{96, 192, 336, 720\}$. For the PEMS datasets, we use forecast horizons of $\{12, 24, 36, 48\}$.

\subsection{Implementation details}\label{sec:details_app}
All baseline models were reproduced using official training scripts from the iTransformer~\citep{itransformer} and TQNet~\citep{tqnet} repositories after checking reproducibility. Models were trained to minimize the MSE loss using the Adam optimizer~\citep{Adam}. The learning rate was selected from the set $\{10^{-3}, 5\times 10^{-4}, 10^{-4}, 5\times 10^{-5}\}$. We employed an early stopping patience of 3, halting training if validation loss did not improve for three consecutive epochs.

When integrating QDF into an existing forecasting model, we retained the original model's established hyperparameters as reported in public benchmarks~\citep{itransformer, fredformer}. Our tuning was conservatively limited to the QDF-specific parameters, i.e., the round of inner update ($\mathrm{N_{in}}$), the number of splits ($\mathrm{K}$), and the update rate ($\eta$), along with the learning rate. The final hyperparameter configuration for each model  was selected based on its performance on the validation set.

\section{More Experimental Results}\label{sec:results_app}

\subsection{Overall performance}\label{sec:overall_app}

\begin{table}[ht]
\caption{Full results on the multi-step forecasting task. The length of history window is set to 96 for all baselines. \texttt{Avg} indicates the results averaged over forecasting lengths: T=96, 192, 336 and 720.}\label{tab:multistep_app_full}
\renewcommand{\arraystretch}{0.8}
\setlength{\tabcolsep}{2pt}
\scriptsize
\centering
\renewcommand{\multirowsetup}{\centering}
\begin{tabular}{c|c|cc|cc|cc|cc|cc|cc|cc|cc|cc|cc|cc}
    \toprule
    \multicolumn{2}{l}{\multirow{2}{*}{\rotatebox{0}{\scaleb{Models}}}} & 
    \multicolumn{2}{c}{\rotatebox{0}{\scaleb{\textbf{QDF}}}} &
    \multicolumn{2}{c}{\rotatebox{0}{\scaleb{TQNet}}} &
    \multicolumn{2}{c}{\rotatebox{0}{\scaleb{PDF}}} &
    \multicolumn{2}{c}{\rotatebox{0}{\scaleb{Fredformer}}} &
    \multicolumn{2}{c}{\rotatebox{0}{\scaleb{iTransformer}}} &
    \multicolumn{2}{c}{\rotatebox{0}{\scaleb{FreTS}}} &
    \multicolumn{2}{c}{\rotatebox{0}{\scaleb{TimesNet}}} &
    \multicolumn{2}{c}{\rotatebox{0}{\scaleb{MICN}}} &
    \multicolumn{2}{c}{\rotatebox{0}{\scaleb{TiDE}}} &
    \multicolumn{2}{c}{\rotatebox{0}{\scaleb{PatchTST}}} &
    \multicolumn{2}{c}{\rotatebox{0}{\scaleb{DLinear}}} \\
    \multicolumn{2}{c}{} &
    \multicolumn{2}{c}{\scaleb{\textbf{(Ours)}}} & 
    \multicolumn{2}{c}{\scaleb{(2025)}} & 
    \multicolumn{2}{c}{\scaleb{(2024)}} & 
    \multicolumn{2}{c}{\scaleb{(2024)}} & 
    \multicolumn{2}{c}{\scaleb{(2024)}} & 
    \multicolumn{2}{c}{\scaleb{(2023)}} & 
    \multicolumn{2}{c}{\scaleb{(2023)}} &
    \multicolumn{2}{c}{\scaleb{(2023)}} &
    \multicolumn{2}{c}{\scaleb{(2023)}} &
    \multicolumn{2}{c}{\scaleb{(2023)}} &
    \multicolumn{2}{c}{\scaleb{(2023)}} \\
    \cmidrule(lr){3-4} \cmidrule(lr){5-6}\cmidrule(lr){7-8} \cmidrule(lr){9-10}\cmidrule(lr){11-12} \cmidrule(lr){13-14} \cmidrule(lr){15-16} \cmidrule(lr){17-18} \cmidrule(lr){19-20} \cmidrule(lr){21-22} \cmidrule(lr){23-24}
    \multicolumn{2}{l}{\rotatebox{0}{\scaleb{Metrics}}}  & \scalea{MSE} & \scalea{MAE}  & \scalea{MSE} & \scalea{MAE}  & \scalea{MSE} & \scalea{MAE}  & \scalea{MSE} & \scalea{MAE}  & \scalea{MSE} & \scalea{MAE}  & \scalea{MSE} & \scalea{MAE} & \scalea{MSE} & \scalea{MAE} & \scalea{MSE} & \scalea{MAE} & \scalea{MSE} & \scalea{MAE} & \scalea{MSE} & \scalea{MAE} & \scalea{MSE} & \scalea{MAE} \\
    \toprule
    
\multirow{5}{*}{{\rotatebox{90}{\scalebox{0.95}{ETTm1}}}}
& 96 & \scalea{\bst{0.307}} & \scalea{\bst{0.349}}& \scalea{\subbst{0.310}} & \scalea{\subbst{0.352}}& \scalea{0.326} & \scalea{0.363}& \scalea{0.326} & \scalea{0.361}& \scalea{0.338} & \scalea{0.372}& \scalea{0.342} & \scalea{0.375}& \scalea{0.368} & \scalea{0.394}& \scalea{0.319} & \scalea{0.366}& \scalea{0.353} & \scalea{0.374}& \scalea{0.325} & \scalea{0.364}& \scalea{0.346} & \scalea{0.373} \\
& 192 & \scalea{\bst{0.352}} & \scalea{\bst{0.376}}& \scalea{\subbst{0.356}} & \scalea{\subbst{0.377}}& \scalea{0.365} & \scalea{0.381}& \scalea{0.365} & \scalea{0.382}& \scalea{0.382} & \scalea{0.396}& \scalea{0.385} & \scalea{0.400}& \scalea{0.406} & \scalea{0.409}& \scalea{0.364} & \scalea{0.395}& \scalea{0.391} & \scalea{0.393}& \scalea{0.363} & \scalea{0.383}& \scalea{0.380} & \scalea{0.390} \\
& 336 & \scalea{\bst{0.383}} & \scalea{\bst{0.398}}& \scalea{\subbst{0.388}} & \scalea{\subbst{0.400}}& \scalea{0.397} & \scalea{0.402}& \scalea{0.396} & \scalea{0.404}& \scalea{0.427} & \scalea{0.424}& \scalea{0.416} & \scalea{0.421}& \scalea{0.454} & \scalea{0.444}& \scalea{0.395} & \scalea{0.425}& \scalea{0.423} & \scalea{0.414}& \scalea{0.404} & \scalea{0.413}& \scalea{0.413} & \scalea{0.414} \\
& 720 & \scalea{\bst{0.441}} & \scalea{\bst{0.434}}& \scalea{\subbst{0.450}} & \scalea{\subbst{0.437}}& \scalea{0.458} & \scalea{0.437}& \scalea{0.459} & \scalea{0.444}& \scalea{0.496} & \scalea{0.463}& \scalea{0.513} & \scalea{0.489}& \scalea{0.527} & \scalea{0.474}& \scalea{0.505} & \scalea{0.499}& \scalea{0.486} & \scalea{0.448}& \scalea{0.463} & \scalea{0.442}& \scalea{0.472} & \scalea{0.450} \\
\cmidrule(lr){2-24}
& Avg & \scalea{\bst{0.371}} & \scalea{\bst{0.389}}& \scalea{\subbst{0.376}} & \scalea{\subbst{0.391}}& \scalea{0.387} & \scalea{0.396}& \scalea{0.387} & \scalea{0.398}& \scalea{0.411} & \scalea{0.414}& \scalea{0.414} & \scalea{0.421}& \scalea{0.438} & \scalea{0.430}& \scalea{0.396} & \scalea{0.421}& \scalea{0.413} & \scalea{0.407}& \scalea{0.389} & \scalea{0.400}& \scalea{0.403} & \scalea{0.407} \\
\midrule

\multirow{5}{*}{{\rotatebox{90}{\scalebox{0.95}{ETTm2}}}}
& 96 & \scalea{\bst{0.170}} & \scalea{\bst{0.253}}& \scalea{\subbst{0.175}} & \scalea{\subbst{0.256}}& \scalea{0.176} & \scalea{0.264}& \scalea{0.177} & \scalea{0.260}& \scalea{0.182} & \scalea{0.265}& \scalea{0.188} & \scalea{0.279}& \scalea{0.184} & \scalea{0.262}& \scalea{0.178} & \scalea{0.277}& \scalea{0.182} & \scalea{0.265}& \scalea{0.180} & \scalea{0.266}& \scalea{0.188} & \scalea{0.283} \\
& 192 & \scalea{\bst{0.234}} & \scalea{\bst{0.294}}& \scalea{0.243} & \scalea{0.300}& \scalea{0.245} & \scalea{0.310}& \scalea{\subbst{0.242}} & \scalea{\subbst{0.300}}& \scalea{0.257} & \scalea{0.315}& \scalea{0.264} & \scalea{0.329}& \scalea{0.257} & \scalea{0.308}& \scalea{0.266} & \scalea{0.343}& \scalea{0.247} & \scalea{0.304}& \scalea{0.285} & \scalea{0.339}& \scalea{0.280} & \scalea{0.356} \\
& 336 & \scalea{\bst{0.290}} & \scalea{\bst{0.331}}& \scalea{\subbst{0.297}} & \scalea{\subbst{0.336}}& \scalea{0.305} & \scalea{0.345}& \scalea{0.302} & \scalea{0.340}& \scalea{0.320} & \scalea{0.354}& \scalea{0.322} & \scalea{0.369}& \scalea{0.315} & \scalea{0.345}& \scalea{0.299} & \scalea{0.354}& \scalea{0.307} & \scalea{0.343}& \scalea{0.309} & \scalea{0.347}& \scalea{0.375} & \scalea{0.420} \\
& 720 & \scalea{\bst{0.387}} & \scalea{\bst{0.389}}& \scalea{\subbst{0.394}} & \scalea{\subbst{0.393}}& \scalea{0.404} & \scalea{0.403}& \scalea{0.399} & \scalea{0.397}& \scalea{0.423} & \scalea{0.411}& \scalea{0.489} & \scalea{0.482}& \scalea{0.452} & \scalea{0.421}& \scalea{0.489} & \scalea{0.482}& \scalea{0.408} & \scalea{0.398}& \scalea{0.437} & \scalea{0.422}& \scalea{0.526} & \scalea{0.508} \\
\cmidrule(lr){2-24}
& Avg & \scalea{\bst{0.270}} & \scalea{\bst{0.317}}& \scalea{\subbst{0.277}} & \scalea{\subbst{0.321}}& \scalea{0.283} & \scalea{0.331}& \scalea{0.280} & \scalea{0.324}& \scalea{0.295} & \scalea{0.336}& \scalea{0.316} & \scalea{0.365}& \scalea{0.302} & \scalea{0.334}& \scalea{0.308} & \scalea{0.364}& \scalea{0.286} & \scalea{0.328}& \scalea{0.303} & \scalea{0.344}& \scalea{0.342} & \scalea{0.392} \\
\midrule

\multirow{5}{*}{{\rotatebox{90}{\scalebox{0.95}{ETTh1}}}}
& 96 & \scalea{\bst{0.365}} & \scalea{\bst{0.389}}& \scalea{\subbst{0.372}} & \scalea{\subbst{0.391}}& \scalea{0.388} & \scalea{0.400}& \scalea{0.377} & \scalea{0.396}& \scalea{0.385} & \scalea{0.405}& \scalea{0.398} & \scalea{0.409}& \scalea{0.399} & \scalea{0.418}& \scalea{0.381} & \scalea{0.416}& \scalea{0.387} & \scalea{0.395}& \scalea{0.381} & \scalea{0.400}& \scalea{0.389} & \scalea{0.404} \\
& 192 & \scalea{\bst{0.427}} & \scalea{\bst{0.421}}& \scalea{\subbst{0.430}} & \scalea{\subbst{0.424}}& \scalea{0.440} & \scalea{0.428}& \scalea{0.437} & \scalea{0.425}& \scalea{0.440} & \scalea{0.437}& \scalea{0.451} & \scalea{0.442}& \scalea{0.452} & \scalea{0.451}& \scalea{0.497} & \scalea{0.489}& \scalea{0.439} & \scalea{0.425}& \scalea{0.450} & \scalea{0.443}& \scalea{0.442} & \scalea{0.440} \\
& 336 & \scalea{\bst{0.466}} & \scalea{\subbst{0.449}}& \scalea{0.486} & \scalea{0.454}& \scalea{0.483} & \scalea{0.449}& \scalea{0.486} & \scalea{0.449}& \scalea{\subbst{0.480}} & \scalea{0.457}& \scalea{0.501} & \scalea{0.472}& \scalea{0.488} & \scalea{0.469}& \scalea{0.589} & \scalea{0.555}& \scalea{0.482} & \scalea{\bst{0.447}}& \scalea{0.501} & \scalea{0.470}& \scalea{0.488} & \scalea{0.467} \\
& 720 & \scalea{\bst{0.466}} & \scalea{\bst{0.467}}& \scalea{0.507} & \scalea{0.486}& \scalea{0.495} & \scalea{0.482}& \scalea{0.488} & \scalea{\subbst{0.467}}& \scalea{0.504} & \scalea{0.492}& \scalea{0.608} & \scalea{0.571}& \scalea{0.549} & \scalea{0.515}& \scalea{0.665} & \scalea{0.617}& \scalea{\subbst{0.484}} & \scalea{0.471}& \scalea{0.504} & \scalea{0.492}& \scalea{0.505} & \scalea{0.502} \\
\cmidrule(lr){2-24}
& Avg & \scalea{\bst{0.431}} & \scalea{\bst{0.431}}& \scalea{0.449} & \scalea{0.439}& \scalea{0.452} & \scalea{0.440}& \scalea{\subbst{0.447}} & \scalea{\subbst{0.434}}& \scalea{0.452} & \scalea{0.448}& \scalea{0.489} & \scalea{0.474}& \scalea{0.472} & \scalea{0.463}& \scalea{0.533} & \scalea{0.519}& \scalea{0.448} & \scalea{0.435}& \scalea{0.459} & \scalea{0.451}& \scalea{0.456} & \scalea{0.453} \\
\midrule

\multirow{5}{*}{{\rotatebox{90}{\scalebox{0.95}{ETTh2}}}}
& 96 & \scalea{\bst{0.286}} & \scalea{\bst{0.338}}& \scalea{0.293} & \scalea{0.343}& \scalea{\subbst{0.291}} & \scalea{0.340}& \scalea{0.293} & \scalea{0.344}& \scalea{0.301} & \scalea{0.349}& \scalea{0.315} & \scalea{0.374}& \scalea{0.321} & \scalea{0.358}& \scalea{0.351} & \scalea{0.398}& \scalea{0.291} & \scalea{\subbst{0.340}}& \scalea{0.299} & \scalea{0.349}& \scalea{0.330} & \scalea{0.383} \\
& 192 & \scalea{\bst{0.361}} & \scalea{\bst{0.388}}& \scalea{\subbst{0.364}} & \scalea{\subbst{0.390}}& \scalea{0.374} & \scalea{0.391}& \scalea{0.372} & \scalea{0.391}& \scalea{0.383} & \scalea{0.397}& \scalea{0.466} & \scalea{0.467}& \scalea{0.418} & \scalea{0.417}& \scalea{0.492} & \scalea{0.489}& \scalea{0.376} & \scalea{0.392}& \scalea{0.383} & \scalea{0.404}& \scalea{0.439} & \scalea{0.450} \\
& 336 & \scalea{\bst{0.408}} & \scalea{\bst{0.422}}& \scalea{\subbst{0.411}} & \scalea{\subbst{0.424}}& \scalea{0.414} & \scalea{0.426}& \scalea{0.420} & \scalea{0.433}& \scalea{0.425} & \scalea{0.432}& \scalea{0.522} & \scalea{0.502}& \scalea{0.464} & \scalea{0.454}& \scalea{0.656} & \scalea{0.582}& \scalea{0.417} & \scalea{0.427}& \scalea{0.439} & \scalea{0.444}& \scalea{0.589} & \scalea{0.538} \\
& 720 & \scalea{\bst{0.419}} & \scalea{\bst{0.439}}& \scalea{0.430} & \scalea{0.444}& \scalea{0.421} & \scalea{0.440}& \scalea{\subbst{0.421}} & \scalea{\subbst{0.439}}& \scalea{0.436} & \scalea{0.448}& \scalea{0.792} & \scalea{0.643}& \scalea{0.434} & \scalea{0.450}& \scalea{0.981} & \scalea{0.718}& \scalea{0.429} & \scalea{0.446}& \scalea{0.438} & \scalea{0.455}& \scalea{0.757} & \scalea{0.626} \\
\cmidrule(lr){2-24}
& Avg & \scalea{\bst{0.368}} & \scalea{\bst{0.397}}& \scalea{\subbst{0.375}} & \scalea{0.400}& \scalea{0.375} & \scalea{\subbst{0.399}}& \scalea{0.377} & \scalea{0.402}& \scalea{0.386} & \scalea{0.407}& \scalea{0.524} & \scalea{0.496}& \scalea{0.409} & \scalea{0.420}& \scalea{0.620} & \scalea{0.546}& \scalea{0.378} & \scalea{0.401}& \scalea{0.390} & \scalea{0.413}& \scalea{0.529} & \scalea{0.499} \\
\midrule

\multirow{5}{*}{{\rotatebox{90}{\scalebox{0.95}{ECL}}}}
& 96 & \scalea{\bst{0.135}} & \scalea{\bst{0.229}}& \scalea{\subbst{0.143}} & \scalea{\subbst{0.237}}& \scalea{0.175} & \scalea{0.259}& \scalea{0.161} & \scalea{0.258}& \scalea{0.150} & \scalea{0.242}& \scalea{0.180} & \scalea{0.266}& \scalea{0.170} & \scalea{0.272}& \scalea{0.170} & \scalea{0.281}& \scalea{0.197} & \scalea{0.274}& \scalea{0.170} & \scalea{0.264}& \scalea{0.197} & \scalea{0.282} \\
& 192 & \scalea{\bst{0.153}} & \scalea{\bst{0.245}}& \scalea{\subbst{0.161}} & \scalea{\subbst{0.252}}& \scalea{0.182} & \scalea{0.266}& \scalea{0.174} & \scalea{0.269}& \scalea{0.168} & \scalea{0.259}& \scalea{0.184} & \scalea{0.272}& \scalea{0.183} & \scalea{0.282}& \scalea{0.185} & \scalea{0.297}& \scalea{0.197} & \scalea{0.277}& \scalea{0.179} & \scalea{0.273}& \scalea{0.197} & \scalea{0.286} \\
& 336 & \scalea{\bst{0.169}} & \scalea{\bst{0.262}}& \scalea{\subbst{0.178}} & \scalea{\subbst{0.270}}& \scalea{0.197} & \scalea{0.282}& \scalea{0.194} & \scalea{0.290}& \scalea{0.182} & \scalea{0.274}& \scalea{0.199} & \scalea{0.290}& \scalea{0.203} & \scalea{0.302}& \scalea{0.190} & \scalea{0.298}& \scalea{0.212} & \scalea{0.292}& \scalea{0.195} & \scalea{0.288}& \scalea{0.209} & \scalea{0.301} \\
& 720 & \scalea{\bst{0.202}} & \scalea{\bst{0.290}}& \scalea{0.218} & \scalea{\subbst{0.303}}& \scalea{0.237} & \scalea{0.315}& \scalea{0.235} & \scalea{0.319}& \scalea{\subbst{0.214}} & \scalea{0.304}& \scalea{0.234} & \scalea{0.322}& \scalea{0.294} & \scalea{0.366}& \scalea{0.221} & \scalea{0.329}& \scalea{0.254} & \scalea{0.325}& \scalea{0.234} & \scalea{0.320}& \scalea{0.245} & \scalea{0.334} \\
\cmidrule(lr){2-24}
& Avg & \scalea{\bst{0.165}} & \scalea{\bst{0.257}}& \scalea{\subbst{0.175}} & \scalea{\subbst{0.265}}& \scalea{0.198} & \scalea{0.281}& \scalea{0.191} & \scalea{0.284}& \scalea{0.179} & \scalea{0.270}& \scalea{0.199} & \scalea{0.288}& \scalea{0.212} & \scalea{0.306}& \scalea{0.192} & \scalea{0.302}& \scalea{0.215} & \scalea{0.292}& \scalea{0.195} & \scalea{0.286}& \scalea{0.212} & \scalea{0.301} \\
\midrule

\multirow{5}{*}{{\rotatebox{90}{\scalebox{0.95}{Weather}}}}
& 96 & \scalea{\bst{0.158}} & \scalea{\bst{0.201}}& \scalea{\subbst{0.160}} & \scalea{\subbst{0.203}}& \scalea{0.181} & \scalea{0.221}& \scalea{0.180} & \scalea{0.220}& \scalea{0.171} & \scalea{0.210}& \scalea{0.174} & \scalea{0.228}& \scalea{0.183} & \scalea{0.229}& \scalea{0.179} & \scalea{0.244}& \scalea{0.192} & \scalea{0.232}& \scalea{0.189} & \scalea{0.230}& \scalea{0.194} & \scalea{0.253} \\
& 192 & \scalea{\bst{0.207}} & \scalea{\bst{0.245}}& \scalea{\subbst{0.210}} & \scalea{\subbst{0.247}}& \scalea{0.232} & \scalea{0.262}& \scalea{0.222} & \scalea{0.258}& \scalea{0.246} & \scalea{0.278}& \scalea{0.213} & \scalea{0.266}& \scalea{0.242} & \scalea{0.276}& \scalea{0.242} & \scalea{0.310}& \scalea{0.240} & \scalea{0.270}& \scalea{0.228} & \scalea{0.262}& \scalea{0.238} & \scalea{0.296} \\
& 336 & \scalea{\bst{0.263}} & \scalea{\bst{0.286}}& \scalea{\subbst{0.267}} & \scalea{\subbst{0.289}}& \scalea{0.285} & \scalea{0.300}& \scalea{0.283} & \scalea{0.301}& \scalea{0.296} & \scalea{0.313}& \scalea{0.270} & \scalea{0.316}& \scalea{0.293} & \scalea{0.312}& \scalea{0.273} & \scalea{0.330}& \scalea{0.292} & \scalea{0.307}& \scalea{0.288} & \scalea{0.305}& \scalea{0.282} & \scalea{0.332} \\
& 720 & \scalea{\subbst{0.342}} & \scalea{\bst{0.339}}& \scalea{0.346} & \scalea{\subbst{0.342}}& \scalea{0.360} & \scalea{0.348}& \scalea{0.358} & \scalea{0.348}& \scalea{0.362} & \scalea{0.353}& \scalea{\bst{0.337}} & \scalea{0.362}& \scalea{0.366} & \scalea{0.361}& \scalea{0.360} & \scalea{0.399}& \scalea{0.364} & \scalea{0.353}& \scalea{0.362} & \scalea{0.354}& \scalea{0.347} & \scalea{0.385} \\
\cmidrule(lr){2-24}
& Avg & \scalea{\bst{0.242}} & \scalea{\bst{0.268}}& \scalea{\subbst{0.246}} & \scalea{\subbst{0.270}}& \scalea{0.265} & \scalea{0.283}& \scalea{0.261} & \scalea{0.282}& \scalea{0.269} & \scalea{0.289}& \scalea{0.249} & \scalea{0.293}& \scalea{0.271} & \scalea{0.295}& \scalea{0.264} & \scalea{0.321}& \scalea{0.272} & \scalea{0.291}& \scalea{0.267} & \scalea{0.288}& \scalea{0.265} & \scalea{0.317} \\
\midrule

\multirow{5}{*}{{\rotatebox{90}{\scalebox{0.95}{PEMS03}}}}
& 12 & \scalea{\bst{0.064}} & \scalea{\bst{0.167}}& \scalea{0.097} & \scalea{0.180}& \scalea{0.092} & \scalea{0.204}& \scalea{0.081} & \scalea{0.191}& \scalea{\subbst{0.072}} & \scalea{\subbst{0.179}}& \scalea{0.085} & \scalea{0.198}& \scalea{0.094} & \scalea{0.201}& \scalea{0.096} & \scalea{0.217}& \scalea{0.117} & \scalea{0.226}& \scalea{0.092} & \scalea{0.210}& \scalea{0.105} & \scalea{0.220} \\
& 24 & \scalea{\bst{0.080}} & \scalea{\bst{0.189}}& \scalea{0.099} & \scalea{\subbst{0.204}}& \scalea{0.149} & \scalea{0.261}& \scalea{0.121} & \scalea{0.240}& \scalea{0.104} & \scalea{0.217}& \scalea{0.129} & \scalea{0.244}& \scalea{0.116} & \scalea{0.221}& \scalea{\subbst{0.095}} & \scalea{0.210}& \scalea{0.233} & \scalea{0.322}& \scalea{0.144} & \scalea{0.263}& \scalea{0.183} & \scalea{0.297} \\
& 36 & \scalea{\bst{0.098}} & \scalea{\bst{0.208}}& \scalea{0.123} & \scalea{0.230}& \scalea{0.210} & \scalea{0.314}& \scalea{0.180} & \scalea{0.292}& \scalea{0.137} & \scalea{0.251}& \scalea{0.173} & \scalea{0.286}& \scalea{0.134} & \scalea{0.237}& \scalea{\subbst{0.107}} & \scalea{\subbst{0.223}}& \scalea{0.379} & \scalea{0.418}& \scalea{0.200} & \scalea{0.309}& \scalea{0.258} & \scalea{0.361} \\
& 48 & \scalea{\bst{0.112}} & \scalea{\bst{0.223}}& \scalea{0.157} & \scalea{0.256}& \scalea{0.275} & \scalea{0.364}& \scalea{0.201} & \scalea{0.316}& \scalea{0.174} & \scalea{0.285}& \scalea{0.207} & \scalea{0.315}& \scalea{0.161} & \scalea{0.262}& \scalea{\subbst{0.125}} & \scalea{\subbst{0.242}}& \scalea{0.535} & \scalea{0.516}& \scalea{0.245} & \scalea{0.344}& \scalea{0.319} & \scalea{0.410} \\
\cmidrule(lr){2-24}
& Avg & \scalea{\bst{0.089}} & \scalea{\bst{0.197}}& \scalea{0.119} & \scalea{\subbst{0.217}}& \scalea{0.181} & \scalea{0.286}& \scalea{0.146} & \scalea{0.260}& \scalea{0.122} & \scalea{0.233}& \scalea{0.149} & \scalea{0.261}& \scalea{0.126} & \scalea{0.230}& \scalea{\subbst{0.106}} & \scalea{0.223}& \scalea{0.316} & \scalea{0.370}& \scalea{0.170} & \scalea{0.282}& \scalea{0.216} & \scalea{0.322} \\
\midrule

\multirow{5}{*}{{\rotatebox{90}{\scalebox{0.95}{PEMS08}}}}
& 12 & \scalea{\bst{0.074}} & \scalea{\bst{0.176}}& \scalea{\subbst{0.079}} & \scalea{\subbst{0.183}}& \scalea{0.100} & \scalea{0.209}& \scalea{0.091} & \scalea{0.199}& \scalea{0.084} & \scalea{0.187}& \scalea{0.096} & \scalea{0.205}& \scalea{0.111} & \scalea{0.208}& \scalea{0.161} & \scalea{0.274}& \scalea{0.121} & \scalea{0.233}& \scalea{0.106} & \scalea{0.223}& \scalea{0.113} & \scalea{0.225} \\
& 24 & \scalea{\bst{0.104}} & \scalea{\bst{0.208}}& \scalea{\subbst{0.117}} & \scalea{\subbst{0.222}}& \scalea{0.168} & \scalea{0.273}& \scalea{0.138} & \scalea{0.245}& \scalea{0.123} & \scalea{0.227}& \scalea{0.151} & \scalea{0.258}& \scalea{0.139} & \scalea{0.232}& \scalea{0.127} & \scalea{0.237}& \scalea{0.232} & \scalea{0.325}& \scalea{0.162} & \scalea{0.275}& \scalea{0.199} & \scalea{0.302} \\
& 36 & \scalea{\bst{0.134}} & \scalea{\bst{0.237}}& \scalea{0.158} & \scalea{0.260}& \scalea{0.244} & \scalea{0.333}& \scalea{0.199} & \scalea{0.303}& \scalea{0.170} & \scalea{0.268}& \scalea{0.203} & \scalea{0.303}& \scalea{0.168} & \scalea{0.260}& \scalea{\subbst{0.148}} & \scalea{\subbst{0.252}}& \scalea{0.376} & \scalea{0.427}& \scalea{0.234} & \scalea{0.331}& \scalea{0.295} & \scalea{0.371} \\
& 48 & \scalea{\bst{0.168}} & \scalea{\bst{0.263}}& \scalea{0.203} & \scalea{0.295}& \scalea{0.327} & \scalea{0.389}& \scalea{0.255} & \scalea{0.338}& \scalea{0.218} & \scalea{0.306}& \scalea{0.247} & \scalea{0.334}& \scalea{0.189} & \scalea{0.272}& \scalea{\subbst{0.175}} & \scalea{\subbst{0.270}}& \scalea{0.543} & \scalea{0.527}& \scalea{0.301} & \scalea{0.382}& \scalea{0.389} & \scalea{0.429} \\
\cmidrule(lr){2-24}
& Avg & \scalea{\bst{0.120}} & \scalea{\bst{0.221}}& \scalea{\subbst{0.139}} & \scalea{\subbst{0.240}}& \scalea{0.210} & \scalea{0.301}& \scalea{0.171} & \scalea{0.271}& \scalea{0.149} & \scalea{0.247}& \scalea{0.174} & \scalea{0.275}& \scalea{0.152} & \scalea{0.243}& \scalea{0.153} & \scalea{0.258}& \scalea{0.318} & \scalea{0.378}& \scalea{0.201} & \scalea{0.303}& \scalea{0.249} & \scalea{0.332} \\
\midrule

\multicolumn{2}{c|}{\scalea{{$1^{\text{st}}$ Count}}} & \scalea{\bst{39}} & \scalea{\bst{39}} & \scalea{0} & \scalea{0} & \scalea{0} & \scalea{0} & \scalea{0} & \scalea{0} & \scalea{0} & \scalea{0} & \scalea{\bst{1}} & \scalea{0} & \scalea{0} & \scalea{0} & \scalea{0} & \scalea{0} & \scalea{0} & \scalea{\bst{1}} & \scalea{0} & \scalea{0} & \scalea{0} & \scalea{0} \\
    \bottomrule
\end{tabular}
\end{table}

We provide additional experiment results of overall performance in \autoref{tab:multistep_app_full}, where the performance of each forecast horizon $\T$ is reported separately.

\subsection{Showcases}
We provide additional experiment results of qualitative examples in \autoref{fig:pred_app_ettm2_336} and \autoref{fig:pred_app_ecl_192}.

\begin{figure}
\begin{center}
\subfigure[Example with TQNet]{
    \includegraphics[width=0.238\linewidth]{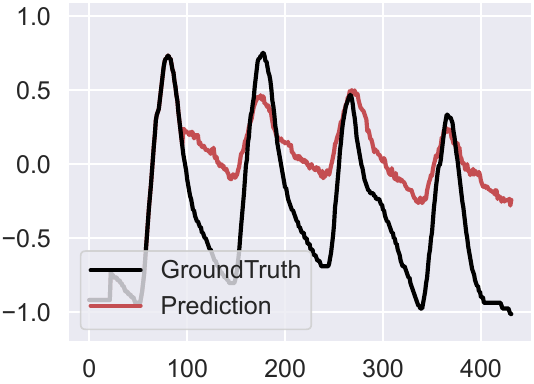}
    \includegraphics[width=0.238\linewidth]{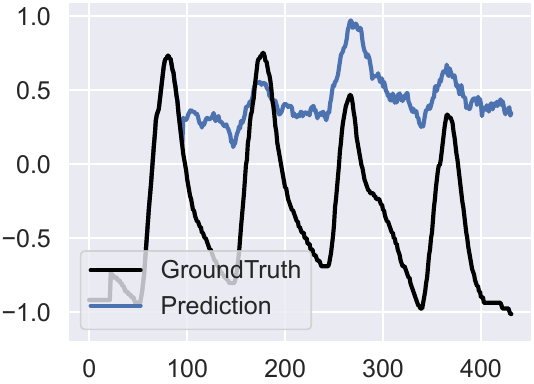}
}
\subfigure[Example 1 with PDF]{
    \includegraphics[width=0.238\linewidth]{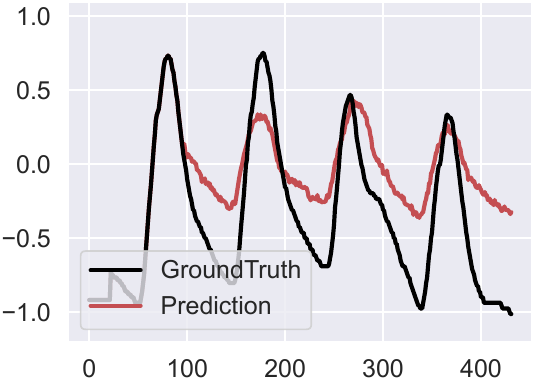}
    \includegraphics[width=0.238\linewidth]{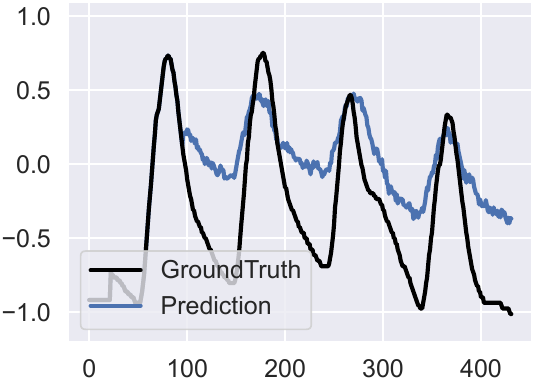}
}

\subfigure[Example 2 with TQNet]{
    \includegraphics[width=0.238\linewidth]{fig/pred/TQNet_ETTm2_336_5520_w_QDF.pdf}
    \includegraphics[width=0.238\linewidth]{fig/pred/TQNet_ETTm2_336_5520_wo_QDF.pdf}
}
\subfigure[Example 2 with PDF]{
    \includegraphics[width=0.238\linewidth]{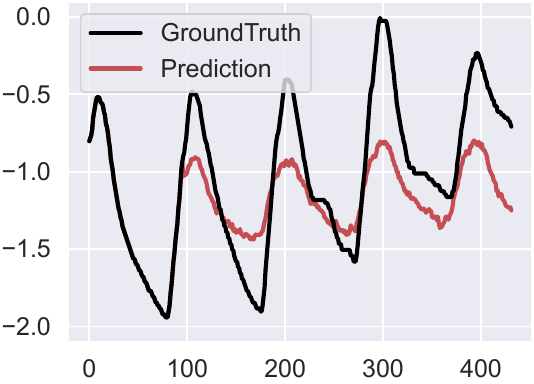}
    \includegraphics[width=0.238\linewidth]{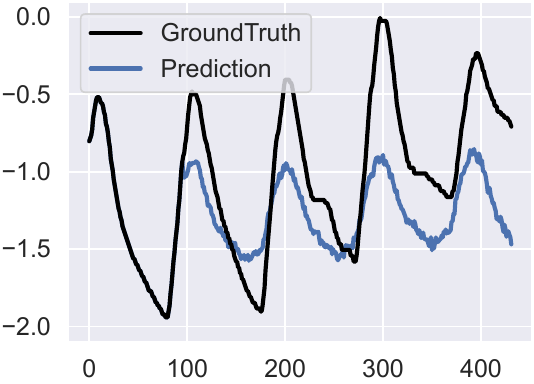}
}

\subfigure[Example 3 with TQNet]{
    \includegraphics[width=0.238\linewidth]{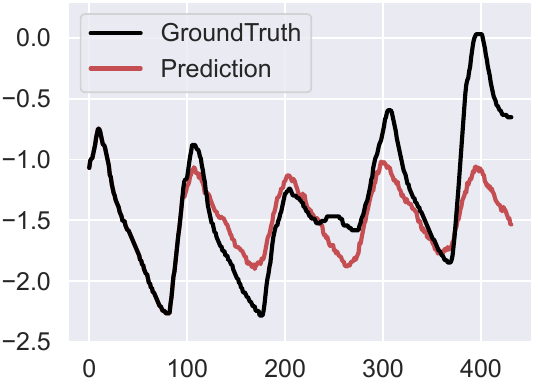}
    \includegraphics[width=0.238\linewidth]{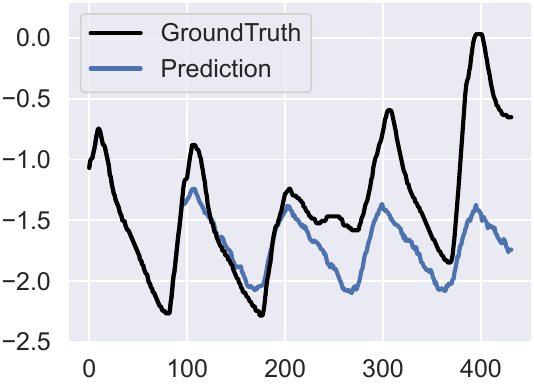}
}
\subfigure[Example 3 with PDF]{
    \includegraphics[width=0.238\linewidth]{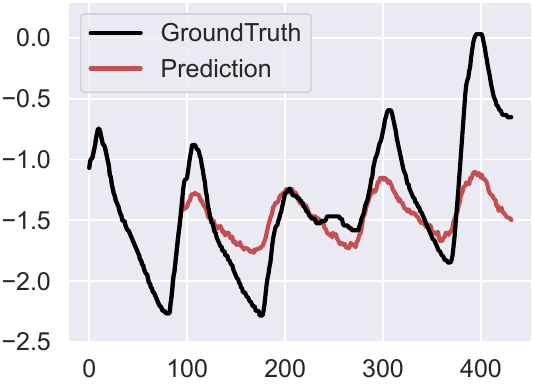}
    \includegraphics[width=0.238\linewidth]{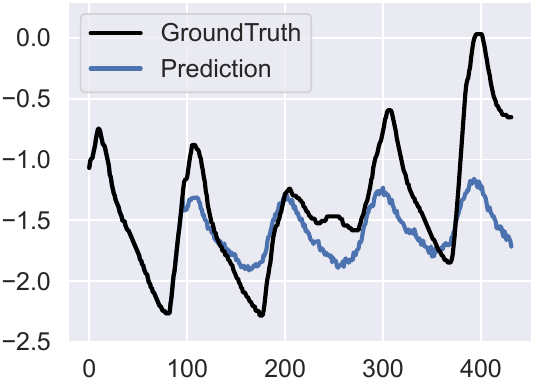}
}
\caption{The forecast sequences generated with DF and QDF. The forecast length is set to 336 and the experiment is conducted on ETTm2.}
\label{fig:pred_app_ettm2_336}
\end{center}
\end{figure}

\begin{figure}
\begin{center}
\subfigure[Example 1 with TQNet]{
    \includegraphics[width=0.238\linewidth]{fig/pred/TQNet_ECL_192_411_w_QDF.pdf}
    \includegraphics[width=0.238\linewidth]{fig/pred/TQNet_ECL_192_411_wo_QDF.pdf}
}
\subfigure[Example 1 with PDF]{
    \includegraphics[width=0.238\linewidth]{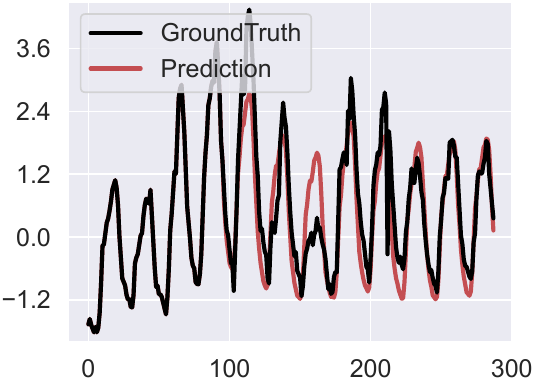}
    \includegraphics[width=0.238\linewidth]{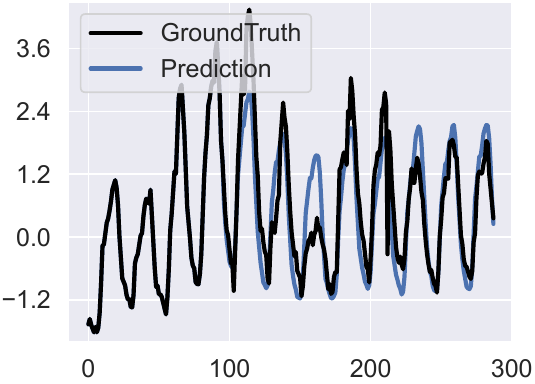}
}

\subfigure[Example 2 with TQNet]{
    \includegraphics[width=0.238\linewidth]{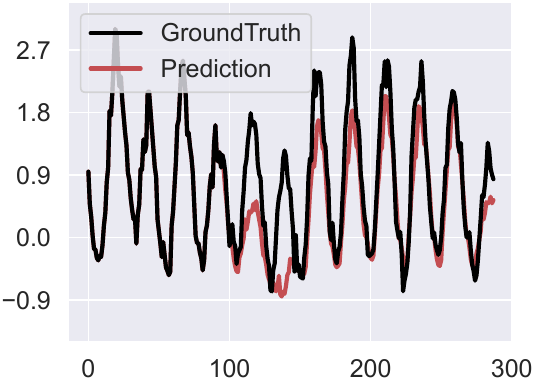}
    \includegraphics[width=0.238\linewidth]{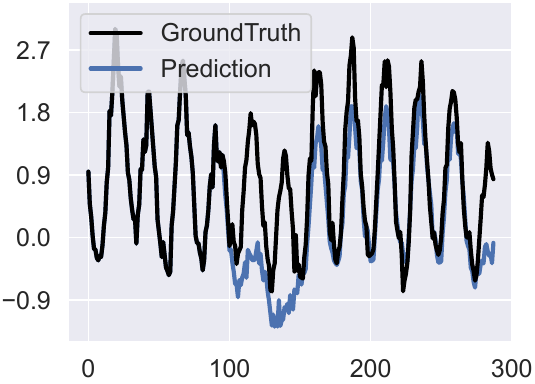}
}
\subfigure[Example 2 with PDF]{
    \includegraphics[width=0.238\linewidth]{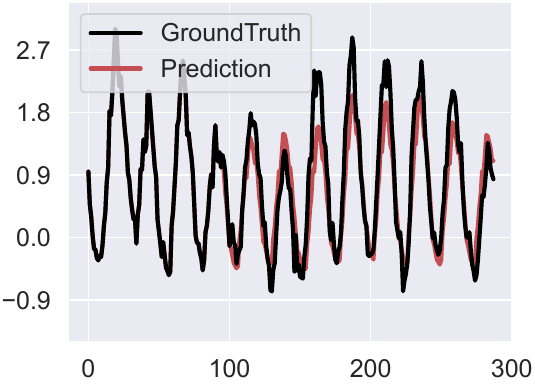}
    \includegraphics[width=0.238\linewidth]{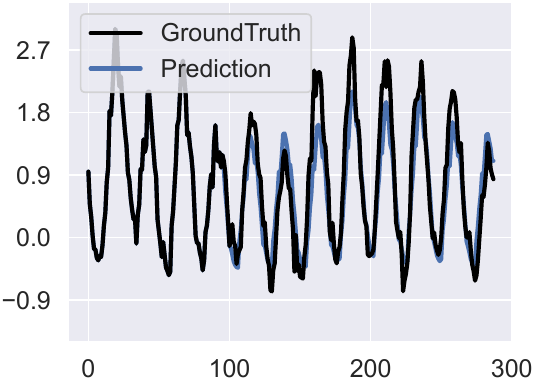}
}

\subfigure[Example 3 with TQNet]{
    \includegraphics[width=0.238\linewidth]{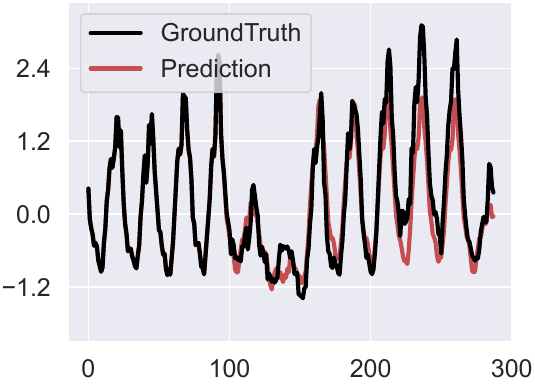}
    \includegraphics[width=0.238\linewidth]{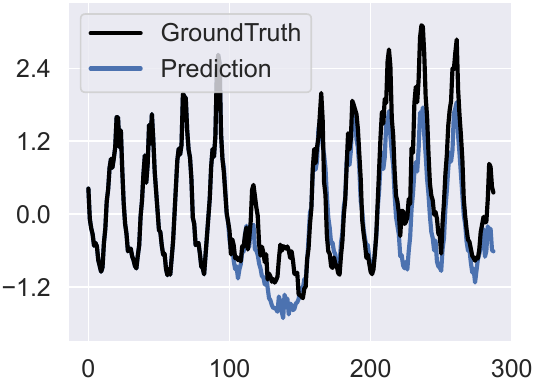}
}
\subfigure[Example 3 with PDF]{
    \includegraphics[width=0.238\linewidth]{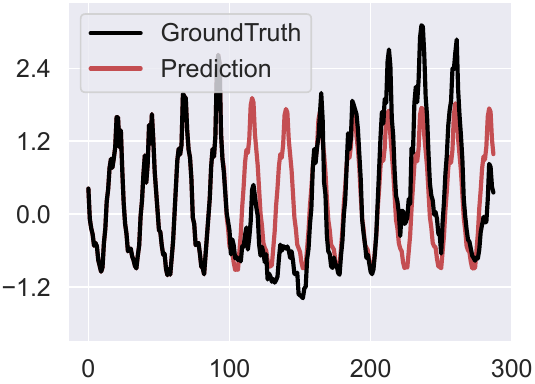}
    \includegraphics[width=0.238\linewidth]{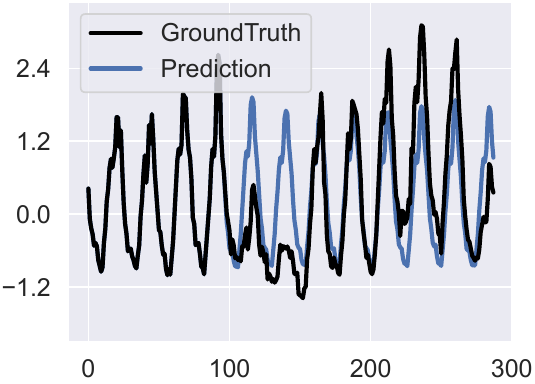}
}
\caption{The forecast sequences generated with DF and QDF. The forecast length is set to 192 and the experiment is conducted on ECL.}
\label{fig:pred_app_ecl_192}
\end{center}
\end{figure}

\subsection{Learning objective comparison}

\begin{table*}
  \caption{Comparable results with different learning objectives.}\label{tab:loss-fred}
  \renewcommand{\arraystretch}{1} \setlength{\tabcolsep}{6.3pt} \scriptsize
  \centering
  \renewcommand{\multirowsetup}{\centering}
  \begin{threeparttable}
  \begin{tabular}{c|c|cc|cc|cc|cc|cc|cc}
    \toprule
    \multicolumn{2}{l}{Loss} & 
    \multicolumn{2}{c}{\textbf{QDF}} &
    \multicolumn{2}{c}{Time-o1} &
    \multicolumn{2}{c}{FreDF} &
    \multicolumn{2}{c}{Koopman} &
    \multicolumn{2}{c}{Soft-DTW} &
    \multicolumn{2}{c}{DF} \\
    \cmidrule(lr){3-4} \cmidrule(lr){5-6}\cmidrule(lr){7-8} \cmidrule(lr){9-10}\cmidrule(lr){11-12}\cmidrule(lr){13-14}
    \multicolumn{2}{l}{Metrics}  & MSE & MAE  & MSE & MAE & MSE & MAE  & MSE & MAE  & MSE & MAE  & MSE & MAE  \\
       \hline
    \rowcolor{blue!8}
    \multicolumn{14}{l}{\textbf{Forecast model:TQNet}}\\\hline

\multirow{5}{*}{{\rotatebox{90}{\scalebox{0.95}{ETTm1}}}}
& 96 & 0.307 & 0.349 & 0.309 & 0.351 & 0.314 & 0.355 & 0.806 & 0.578 & 0.315 & 0.353 & 0.310 & 0.352 \\
& 192 & 0.352 & 0.376 & 0.353 & 0.375 & 0.359 & 0.378 & 0.619 & 0.515 & 0.360 & 0.377 & 0.356 & 0.377 \\
& 336 & 0.383 & 0.398 & 0.383 & 0.398 & 0.382 & 0.396 & 0.507 & 0.468 & 0.398 & 0.402 & 0.388 & 0.400 \\
& 720 & 0.441 & 0.434 & 0.444 & 0.436 & 0.444 & 0.432 & 0.450 & 0.437 & 0.476 & 0.446 & 0.450 & 0.437 \\
\cmidrule(lr){2-14}
& Avg & 0.371 & 0.389 & 0.372 & 0.390 & 0.375 & 0.390 & 0.595 & 0.499 & 0.387 & 0.394 & 0.376 & 0.391 \\
\midrule
\multirow{5}{*}{{\rotatebox{90}{\scalebox{0.95}{ETTh1}}}}
& 96 & 0.365 & 0.389 & 0.381 & 0.395 & 0.369 & 0.391 & 0.415 & 0.425 & 0.379 & 0.390 & 0.372 & 0.391 \\
& 192 & 0.427 & 0.421 & 0.427 & 0.424 & 0.425 & 0.422 & 0.430 & 0.422 & 0.437 & 0.424 & 0.430 & 0.424 \\
& 336 & 0.466 & 0.449 & 0.471 & 0.444 & 0.467 & 0.445 & 0.474 & 0.445 & 0.488 & 0.453 & 0.486 & 0.454 \\
& 720 & 0.466 & 0.467 & 0.469 & 0.466 & 0.468 & 0.469 & 0.483 & 0.474 & 0.510 & 0.487 & 0.507 & 0.486 \\
\cmidrule(lr){2-14}
& Avg & 0.431 & 0.431 & 0.437 & 0.432 & 0.432 & 0.432 & 0.451 & 0.442 & 0.453 & 0.438 & 0.449 & 0.439 \\
\midrule
\multirow{5}{*}{{\rotatebox{90}{\scalebox{0.95}{ECL}}}}
& 96 & 0.135 & 0.229 & 0.136 & 0.228 & 0.136 & 0.228 & 0.137 & 0.231 & 0.162 & 0.258 & 0.143 & 0.237 \\
& 192 & 0.153 & 0.245 & 0.154 & 0.245 & 0.155 & 0.245 & 0.154 & 0.247 & 0.446 & 0.449 & 0.161 & 0.252 \\
& 336 & 0.169 & 0.262 & 0.171 & 0.262 & 0.172 & 0.263 & 0.171 & 0.264 & 0.912 & 0.675 & 0.178 & 0.270 \\
& 720 & 0.202 & 0.290 & 0.208 & 0.293 & 0.209 & 0.293 & 0.204 & 0.292 & 0.971 & 0.715 & 0.218 & 0.303 \\
\cmidrule(lr){2-14}
& Avg & 0.165 & 0.257 & 0.167 & 0.257 & 0.168 & 0.257 & 0.166 & 0.258 & 0.623 & 0.524 & 0.175 & 0.265 \\
\midrule
\multirow{5}{*}{{\rotatebox{90}{\scalebox{0.95}{Weather}}}}
& 96 & 0.158 & 0.201 & 0.159 & 0.201 & 0.158 & 0.199 & 0.223 & 0.268 & 0.161 & 0.202 & 0.160 & 0.203 \\
& 192 & 0.207 & 0.245 & 0.209 & 0.246 & 0.209 & 0.246 & 0.269 & 0.304 & 0.212 & 0.247 & 0.210 & 0.247 \\
& 336 & 0.263 & 0.286 & 0.268 & 0.290 & 0.266 & 0.288 & 0.291 & 0.309 & 0.270 & 0.289 & 0.267 & 0.289 \\
& 720 & 0.342 & 0.339 & 0.344 & 0.341 & 0.344 & 0.341 & 0.346 & 0.343 & 0.378 & 0.365 & 0.346 & 0.342 \\
\cmidrule(lr){2-14}
& Avg & 0.242 & 0.268 & 0.245 & 0.269 & 0.244 & 0.268 & 0.282 & 0.306 & 0.255 & 0.276 & 0.246 & 0.270 \\

\hline
    \rowcolor{blue!8}
    \multicolumn{14}{l}{\textbf{Forecast model:PDF}}\\\hline
    
\multirow{5}{*}{{\rotatebox{90}{\scalebox{0.95}{ETTm1}}}}
& 96 & 0.320 & 0.358 & 0.326 & 0.361 & 0.325 & 0.362 & 1.051 & 0.663 & 0.323 & 0.362 & 0.326 & 0.363 \\
& 192 & 0.361 & 0.380 & 0.371 & 0.386 & 0.372 & 0.388 & 0.420 & 0.414 & 0.371 & 0.388 & 0.365 & 0.381 \\
& 336 & 0.390 & 0.401 & 0.401 & 0.409 & 0.399 & 0.409 & 0.421 & 0.415 & 0.408 & 0.413 & 0.397 & 0.402 \\
& 720 & 0.451 & 0.437 & 0.448 & 0.439 & 0.453 & 0.443 & 0.456 & 0.448 & 0.480 & 0.454 & 0.458 & 0.437 \\
\cmidrule(lr){2-14}
& Avg & 0.381 & 0.394 & 0.386 & 0.399 & 0.387 & 0.400 & 0.587 & 0.485 & 0.396 & 0.404 & 0.387 & 0.396 \\
\midrule
\multirow{5}{*}{{\rotatebox{90}{\scalebox{0.95}{ETTh1}}}}
& 96 & 0.375 & 0.391 & 0.380 & 0.403 & 0.373 & 0.393 & 0.632 & 0.533 & 0.383 & 0.405 & 0.388 & 0.400 \\
& 192 & 0.423 & 0.419 & 0.422 & 0.425 & 0.423 & 0.426 & 0.424 & 0.429 & 0.430 & 0.432 & 0.440 & 0.428 \\
& 336 & 0.461 & 0.439 & 0.463 & 0.441 & 0.477 & 0.446 & 0.456 & 0.450 & 0.462 & 0.453 & 0.483 & 0.449 \\
& 720 & 0.484 & 0.468 & 0.485 & 0.483 & 0.475 & 0.476 & 0.476 & 0.478 & 0.511 & 0.496 & 0.495 & 0.482 \\
\cmidrule(lr){2-14}
& Avg & 0.436 & 0.429 & 0.438 & 0.438 & 0.437 & 0.435 & 0.497 & 0.472 & 0.447 & 0.447 & 0.452 & 0.440 \\
\midrule
\multirow{5}{*}{{\rotatebox{90}{\scalebox{0.95}{ECL}}}}
& 96 & 0.171 & 0.257 & 0.173 & 0.253 & 0.163 & 0.246 & 0.194 & 0.278 & 0.164 & 0.250 & 0.175 & 0.259 \\
& 192 & 0.177 & 0.261 & 0.181 & 0.262 & 0.179 & 0.261 & 0.173 & 0.260 & 0.387 & 0.410 & 0.182 & 0.266 \\
& 336 & 0.192 & 0.277 & 0.196 & 0.282 & 0.196 & 0.278 & 0.189 & 0.276 & 0.966 & 0.698 & 0.197 & 0.282 \\
& 720 & 0.234 & 0.312 & 0.229 & 0.307 & 0.237 & 0.312 & 0.228 & 0.310 & 1.263 & 0.834 & 0.237 & 0.315 \\
\cmidrule(lr){2-14}
& Avg & 0.194 & 0.277 & 0.195 & 0.276 & 0.194 & 0.274 & 0.196 & 0.281 & 0.695 & 0.548 & 0.198 & 0.281 \\
\midrule
\multirow{5}{*}{{\rotatebox{90}{\scalebox{0.95}{Weather}}}}
& 96 & 0.176 & 0.218 & 0.178 & 0.219 & 0.173 & 0.216 & 0.202 & 0.242 & 0.178 & 0.219 & 0.181 & 0.221 \\
& 192 & 0.225 & 0.260 & 0.236 & 0.267 & 0.235 & 0.268 & 0.225 & 0.258 & 0.232 & 0.262 \\
& 336 & 0.280 & 0.299 & 0.284 & 0.304 & 0.274 & 0.295 & 0.280 & 0.302 & 0.281 & 0.296 & 0.285 & 0.300 \\
& 720 & 0.357 & 0.347 & 0.357 & 0.348 & 0.356 & 0.350 & 0.353 & 0.347 & 4.502 & 1.036 & 0.360 & 0.348 \\
\cmidrule(lr){2-14}
& Avg & 0.259 & 0.281 & 0.264 & 0.284 & 0.268 & 0.287 & 0.268 & 0.290 & 1.296 & 0.452 & 0.265 & 0.283 \\

    \bottomrule
  \end{tabular}
  \end{threeparttable}
\end{table*}

We provide additional experiment results of learning objective comparison in \autoref{tab:loss-fred}.

\subsection{Generalization studies}\label{sec:generalize_app}

We provide additional experiment results of generalization studies in \autoref{fig:backbone_app}.

\begin{figure}
\begin{center}
\includegraphics[width=0.245\linewidth]{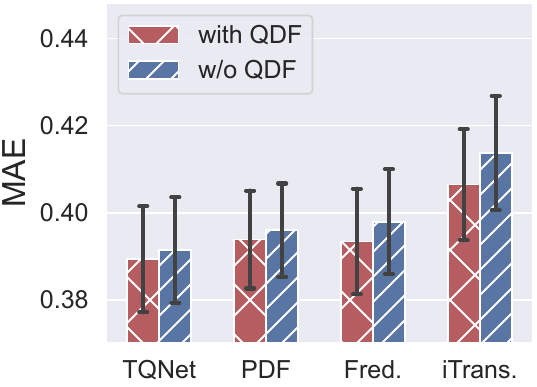}
\includegraphics[width=0.245\linewidth]{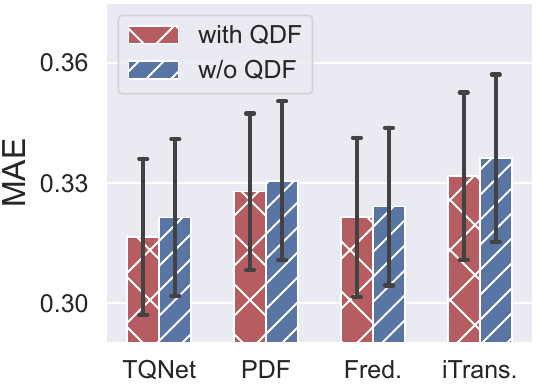}
\includegraphics[width=0.245\linewidth]{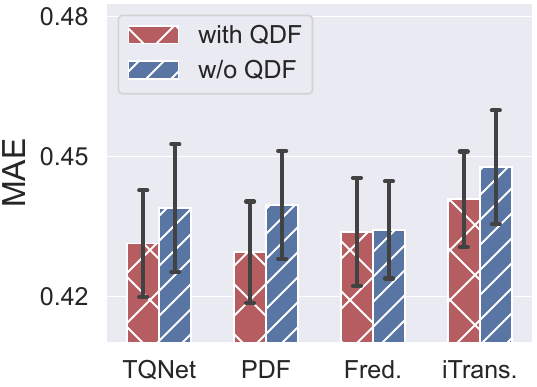}
\includegraphics[width=0.245\linewidth]{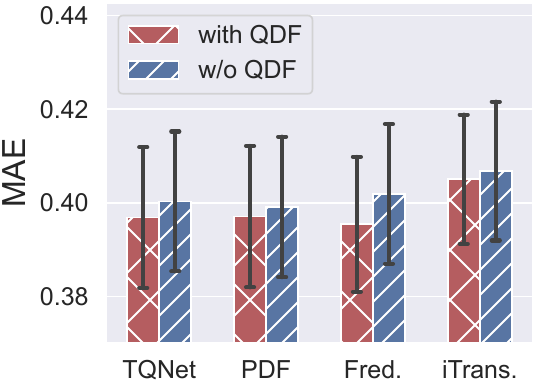}
\subfigure[ETTm1]{\includegraphics[width=0.245\linewidth]{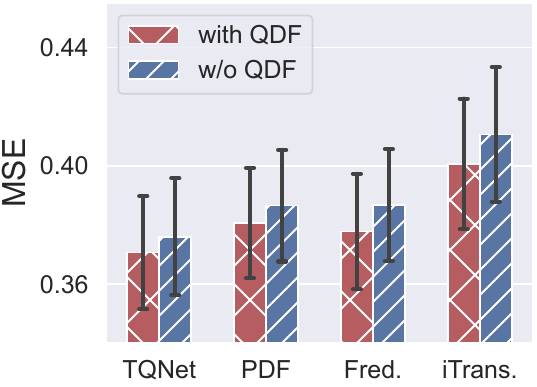}}
\subfigure[ETTm2]{\includegraphics[width=0.245\linewidth]{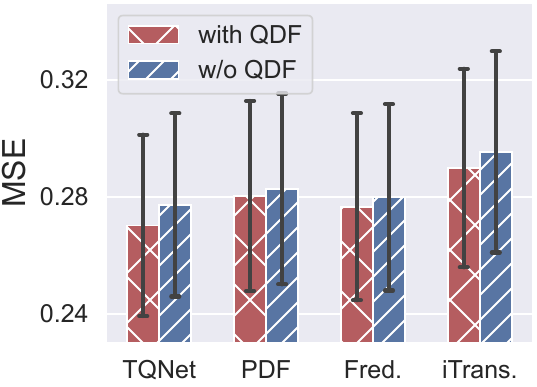}}
\subfigure[ETTh1]{\includegraphics[width=0.245\linewidth]{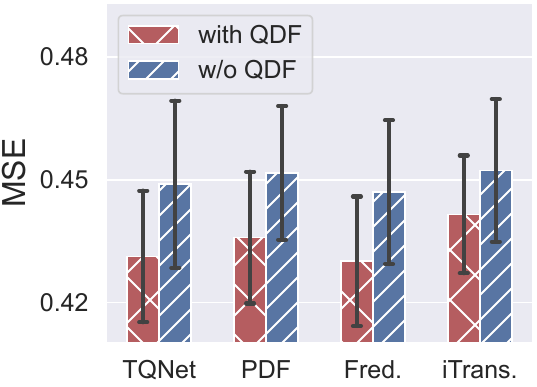}}
\subfigure[ETTh2]{\includegraphics[width=0.245\linewidth]{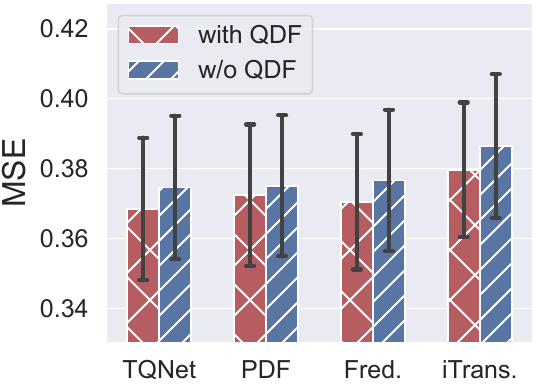}}
\caption{Performance of different forecast models with and without QDF. The forecast errors are averaged over forecast lengths and the error bars represent 50\% confidence intervals.}
\label{fig:backbone_app}
\end{center}
\end{figure}

\subsection{Case study with PatchTST of varying historical lengths}

\begin{table}
\centering
\caption{Varying input sequence length results on the Weather dataset.}\label{tab:vary_seq_len}
\renewcommand{\arraystretch}{0.85} \setlength{\tabcolsep}{10pt} \scriptsize
\centering
\renewcommand{\multirowsetup}{\centering}
\begin{tabular}{c|c|c|cc|cc|cc|cc}
    \toprule
    \multicolumn{3}{c|}{\rotatebox{0}{Models}} & \multicolumn{2}{c}{\textbf{QDF}} & \multicolumn{2}{c|}{TQNet} & \multicolumn{2}{c}{\textbf{QDF}} & \multicolumn{2}{c}{PatchTST} \\
    \cmidrule(lr){4-5} \cmidrule(lr){6-7} \cmidrule(lr){8-9} \cmidrule(lr){10-11}
    \multicolumn{3}{c|}{\rotatebox{0}{Metrics}} & MSE & MAE & MSE & MAE & MSE & MAE & MSE & MAE \\
    \midrule
    \multirow{20}{*}{\rotatebox{90}{Input sequence length}} 

& \multirow{5}{*}{96}
& 96 & 0.158 & 0.201 & 0.160 & 0.203 & 0.180 & 0.224 & 0.189 & 0.230 \\
&& 192 & 0.207 & 0.245 & 0.210 & 0.247 & 0.226 & 0.262 & 0.228 & 0.262 \\
&& 336 & 0.263 & 0.286 & 0.267 & 0.289 & 0.279 & 0.300 & 0.288 & 0.305 \\
&& 720 & 0.342 & 0.339 & 0.346 & 0.342 & 0.354 & 0.347 & 0.362 & 0.354 \\
\cmidrule(lr){3-11}
&& Avg & 0.242 & 0.268 & 0.246 & 0.270 & 0.260 & 0.283 & 0.267 & 0.288 \\
\cmidrule(lr){2-11}

& \multirow{5}{*}{192}
& 96 & 0.152 & 0.199 & 0.151 & 0.197 & 0.161 & 0.208 & 0.163 & 0.209 \\
&& 192 & 0.198 & 0.241 & 0.198 & 0.241 & 0.207 & 0.248 & 0.207 & 0.249 \\
&& 336 & 0.252 & 0.282 & 0.253 & 0.283 & 0.259 & 0.287 & 0.268 & 0.293 \\
&& 720 & 0.324 & 0.332 & 0.327 & 0.334 & 0.334 & 0.337 & 0.511 & 0.451 \\
\cmidrule(lr){3-11}
&& Avg & 0.231 & 0.263 & 0.232 & 0.264 & 0.240 & 0.270 & 0.287 & 0.301 \\
\cmidrule(lr){2-11}

& \multirow{5}{*}{336}
& 96 & 0.148 & 0.198 & 0.149 & 0.198 & 0.160 & 0.214 & 0.158 & 0.208 \\
&& 192 & 0.195 & 0.240 & 0.196 & 0.243 & 0.204 & 0.253 & 0.235 & 0.291 \\
&& 336 & 0.244 & 0.279 & 0.246 & 0.281 & 0.251 & 0.287 & 0.252 & 0.287 \\
&& 720 & 0.313 & 0.327 & 0.318 & 0.331 & 0.324 & 0.338 & 0.326 & 0.336 \\
\cmidrule(lr){3-11}
&& Avg & 0.225 & 0.261 & 0.227 & 0.263 & 0.235 & 0.273 & 0.243 & 0.280 \\
\cmidrule(lr){2-11}

& \multirow{5}{*}{720}
& 96 & 0.148 & 0.199 & 0.155 & 0.206 & 0.161 & 0.217 & 0.153 & 0.205 \\
&& 192 & 0.192 & 0.241 & 0.203 & 0.251 & 0.205 & 0.255 & 0.205 & 0.254 \\
&& 336 & 0.246 & 0.285 & 0.257 & 0.295 & 0.254 & 0.293 & 0.248 & 0.288 \\
&& 720 & 0.310 & 0.329 & 0.319 & 0.339 & 0.315 & 0.337 & 0.317 & 0.339 \\
\cmidrule(lr){3-11}
&& Avg & 0.224 & 0.264 & 0.233 & 0.273 & 0.234 & 0.276 & 0.231 & 0.272 \\

    \bottomrule
\end{tabular}
\end{table}

We provide additional experiment results of varying historical lengths in \autoref{tab:vary_seq_len}, complementing the fixed length of 96 used in the main text. The forecast models selected include TQNet~\citep{tqnet} which is the recent state-of-the-art forecast model, and PatchTST~\citep{PatchTST} which is known to require large historical lengths. The results demonstrate that QDF consistently improves both forecast models across different historical sequence lengths.

\subsection{Random Seed Sensitivity}

\begin{table}
\centering
\caption{Experimental results ($\mathrm{mean}_{\pm\mathrm{std}}$) with varying seeds (2021-2025).}\label{tab:seed}
\renewcommand{\arraystretch}{1}
\setlength{\tabcolsep}{4pt}
\scriptsize
\centering
\renewcommand{\multirowsetup}{\centering}
\begin{tabular}{c|cc|cc|cc|cc}
    \toprule
    \rotatebox{0}{Dataset} & \multicolumn{4}{c|}{ECL} & \multicolumn{4}{c}{Weather} \\
    \cmidrule(lr){2-9}
    Models & \multicolumn{2}{c}{\textbf{QDF}} & \multicolumn{2}{c|}{DF} & \multicolumn{2}{c}{\textbf{QDF}} & \multicolumn{2}{c}{DF} \\
    \cmidrule(lr){2-3} \cmidrule(lr){4-5} \cmidrule(lr){6-7} \cmidrule(lr){8-9}
    Metrics & MSE & MAE & MSE & MAE & MSE & MAE & MSE & MAE \\
    \midrule

96 & 0.135$_{\pm 0.000}$ & 0.229$_{\pm 0.000}$ & 0.143$_{\pm 0.000}$ & 0.237$_{\pm 0.000}$ & 0.160$_{\pm 0.001}$ & 0.203$_{\pm 0.001}$ & 0.160$_{\pm 0.001}$ & 0.203$_{\pm 0.001}$ \\
192 & 0.153$_{\pm 0.000}$ & 0.245$_{\pm 0.000}$ & 0.161$_{\pm 0.000}$ & 0.252$_{\pm 0.000}$ & 0.208$_{\pm 0.001}$ & 0.246$_{\pm 0.001}$ & 0.211$_{\pm 0.001}$ & 0.248$_{\pm 0.001}$ \\
336 & 0.169$_{\pm 0.000}$ & 0.262$_{\pm 0.000}$ & 0.178$_{\pm 0.000}$ & 0.270$_{\pm 0.000}$ & 0.264$_{\pm 0.001}$ & 0.287$_{\pm 0.001}$ & 0.266$_{\pm 0.001}$ & 0.289$_{\pm 0.001}$ \\
720 & 0.202$_{\pm 0.002}$ & 0.291$_{\pm 0.002}$ & 0.218$_{\pm 0.000}$ & 0.303$_{\pm 0.000}$ & 0.343$_{\pm 0.001}$ & 0.340$_{\pm 0.001}$ & 0.345$_{\pm 0.001}$ & 0.342$_{\pm 0.000}$ \\
\cmidrule(lr){1-9}
Avg & 0.165$_{\pm 0.001}$ & 0.257$_{\pm 0.000}$ & 0.175$_{\pm 0.000}$ & 0.265$_{\pm 0.000}$ & 0.244$_{\pm 0.001}$ & 0.269$_{\pm 0.001}$ & 0.246$_{\pm 0.001}$ & 0.271$_{\pm 0.001}$ \\
    \bottomrule
\end{tabular}
\end{table}
We provide additional experiment results of random seed sensitivity in \autoref{tab:seed}. The results include the mean and standard deviation from experiments using five different random seeds (2021, 2022, 2023, 2024, 2025) in \autoref{tab:seed}, which showcase minimal sensitivity to random seeds.
\subsection{Complexity}\label{sec:complexity}

We provide additional experiment results of the running time of QDF in \autoref{fig:complex}. Specifically, we investigate (1) the complexity of each inner-loop update, i.e., calculating $\mathcal{L}_{\bSigma}$ with fixed $\bSigma$ for updating $\theta$, and (2) the complexity of each outer-loop update, i.e., calculating $\mathcal{L}_{\bSigma}$ with fixed $\theta$ for updating $\bSigma$. The forward phase calculates $\mathcal{L}_{\bSigma}$ while the backward phase performs updates. 

As expected, the running time for both forward and backward phases increases with the forecast horizon $\T$, since $\T$ determines the size of the weighting matrix $\bSigma$ involved in the learning objective. Nevertheless, the running time remains below 2 ms even when $\T$ increased to 720. Moreover, QDF’s additional computations are confined exclusively to the training phase and are entirely isolated from inference.

\textit{Therefore, QDF introduces no additional complexity to model inference, and the extra computational cost during training is minimal.}

\begin{figure}
\begin{center}
\subfigure[Running time in the forward phase.]{\includegraphics[width=0.24\linewidth]{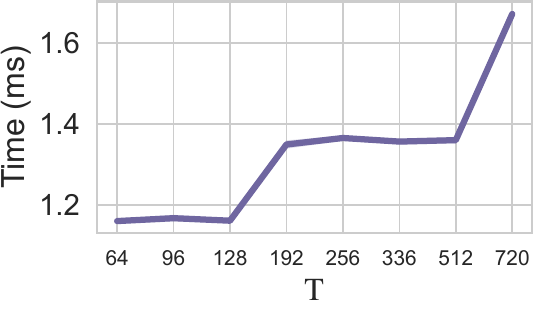}
\includegraphics[width=0.24\linewidth]{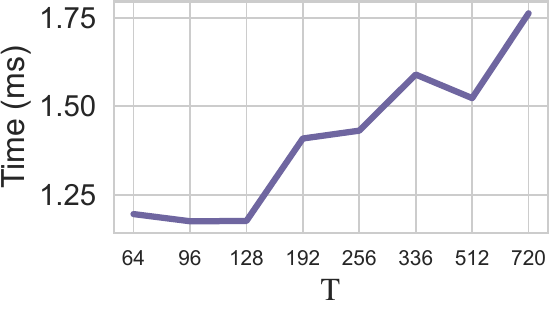}
}
\subfigure[Running time in the backward phase.]{\includegraphics[width=0.24\linewidth]{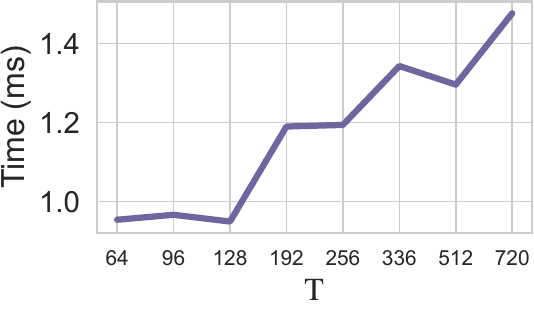}
\includegraphics[width=0.24\linewidth]{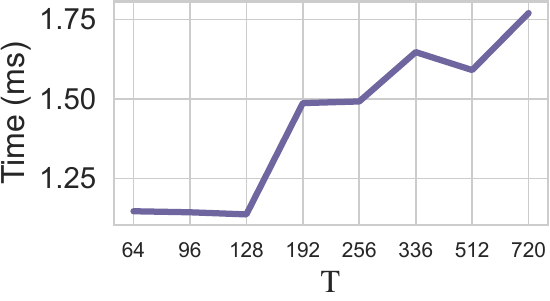}
}
\caption{The running time of the QDF algorithm given varying forecast horizons ($\mathrm{T}$). In each subfigure, the left panel considers the complexity of each inner-loop update (i.e., step 4 in Algorithm~\ref{algo:1}), the right panel considers the complexity of each outer-loop update (i.e., step 7 in Algorithm~\ref{algo:1}).}\label{fig:complex}
\end{center}
\end{figure}

\section{Statement on the Use of Large Language Models (LLMs)}
In accordance with the conference guidelines, we disclose our use of Large Language Models (LLMs) in the preparation of this paper as follows:

We used LLMs (specifically, OpenAI GPT-4.1, GPT-5 and Google Gemini 2.5) \emph{solely for checking grammar errors and improving the readability of the manuscript}. The LLMs \emph{were not involved in research ideation, the development of research contributions, experiment design, data analysis, or interpretation of results}. All substantive content and scientific claims were created entirely by the authors. The authors have reviewed all LLM-assisted text to ensure accuracy and originality, and take full responsibility for the contents of the paper. The LLMs are not listed as an author.

\end{document}